
\documentclass{article}


\usepackage{microtype}
\usepackage{subfigure}
\usepackage{tikz}
\usetikzlibrary{arrows.meta,shapes,positioning,calc}
\usepackage{amsmath, amssymb}
\usetikzlibrary{calc, positioning, arrows.meta, shapes.geometric, shadows, backgrounds, fit, decorations.pathreplacing, calligraphy}
\usepackage{booktabs} 
\usepackage{hyperref}





\usepackage[final]{cpal_2025}
\usepackage{amsmath}
\usepackage{amssymb}
\usepackage{mathtools}
\usepackage{amsthm}
\usepackage{multirow}
\usepackage[textwidth=0.5in]{todonotes}

\newtheorem{theorem}{Theorem}
\newtheorem{lemma}{Lemma}
\newtheorem{proposition}{Proposition}

\usepackage{amsthm}

\newtheorem{definition}{Definition}

\theoremstyle{remark}
\newtheorem*{remark}{Remark}

\usepackage[capitalize,noabbrev]{cleveref}

\definecolor{ForestGreen}{RGB}{5,166,88}
\definecolor{LavaRed}{RGB}{222,48,28}
\definecolor{LightGrey}{RGB}{180,180,180}


\author{
Ru Wang$^{1}$, \quad
Wei Huang$^{2,3}$, \quad
Selena Song$^{1}$, \quad
Haoyu Zhang$^{1}$ \\
Qian Niu$^{1}$, \quad
Yusuke Iwasawa$^{1}$, \quad
Yutaka Matsuo$^{1}$, \quad
Jiaxian Guo$^{4}$ \\
$^{1}$The University of Tokyo \quad
$^{2}$ RIKEN Center for Advanced Intelligence Project \\
$^{3}$ The Institute of Statistical Mathematics \quad
$^{4}$ Google Research Australia\\
}

\begin{document}

\title{Beyond In-Distribution Success: Scaling Curves of CoT Granularity for Language Model Generalization}

\maketitle

\begin{abstract}
Generalization to novel compound tasks under distribution shift is important for deploying transformer-based language models (LMs). This work investigates Chain-of-Thought (CoT) reasoning as a means to enhance OOD generalization. Through controlled experiments across several compound tasks, we reveal three key insights: (1) While QA-trained models achieve near-perfect in-distribution accuracy, their OOD performance degrades catastrophically, even with 10000k+ training examples; (2) the granularity of CoT data strongly correlates with generalization performance; finer-grained CoT data leads to better generalization; (3) CoT exhibits remarkable sample efficiency, matching QA performance with much less (even 80\%) data.
Theoretically, we demonstrate that CoT forces internalization of valid dependency structures, and thus can achieve better generalization. Further, we show that transformer positional embeddings can amplify generalization by emphasizing subtask condition recurrence in long CoT sequences. Our combined theoretical and empirical analysis provides compelling evidence for CoT reasoning as a crucial training paradigm for enabling LM generalization on multi-step reasoning tasks under structural distributional shifts..
\end{abstract}

\section{Introduction}
\vspace{-0.7em}
Transformer-based language models (LMs) \cite{brown2020language,chowdhery2022palm,touvron2023llama} have demonstrated unprecedented capabilities in knowledge retrieval \cite{kojima2022large,wei2022chain,wei2022emergent}  and reasoning \cite{hendrycks2020measuring,cobbe2021training}, driven by large-scale pretraining on diverse text corpora \cite{ouyang2022training,wei2021finetuned,longpre2023flan}. While these models excel at tasks with clear input-output mappings, their ability to generalize to compound tasks—those requiring dynamic planning, multi-step reasoning, and adaptation to evolving contexts—remains a critical challenge. Real-world applications such as GUI automation \cite{wang2024gui,yang2024aria, shen2024falcon, qinghong2024showui, verma2024adaptagent}, textual puzzle solving \cite{giadikiaroglou2024puzzle,saha2024language}, and strategic gameplay like poker \cite{guo2023suspicion,zhang2024agent,huang2024pokergpt} demand not only procedural knowledge but also the capacity to handle distribution shift between training and deployment environments. 

\begin{figure}[!tbh]
    \centering
        \includegraphics[width=0.9\linewidth]{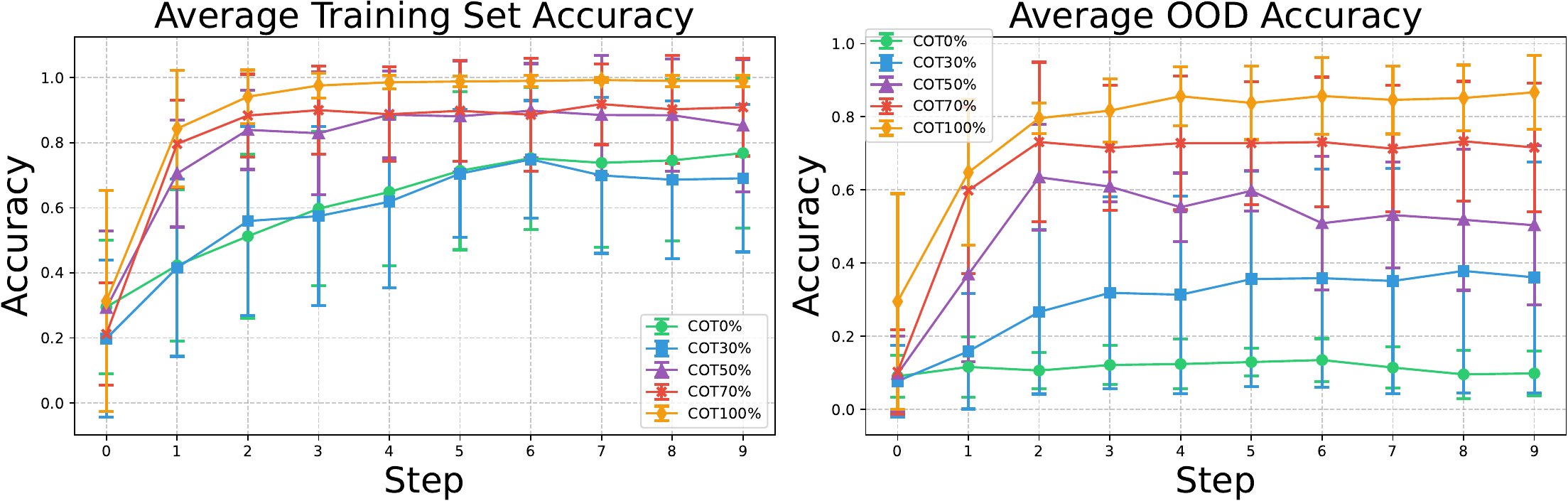}
    \vspace{-0.7\baselineskip}
    \caption{The illustration of the impact of the granularity of Chain-of-Thought on In-Distribution (IID) and Out-of-Distribution (OOD) performance. Left: IID performance. Right: OOD performance. Results are averaged over four compound tasks.  While models trained without CoT achieve high IID accuracy (~80\%), they exhibit a substantially poor generalization performance (~10\%) on OOD data.
}
    \label{fig:average}
\end{figure}

A central challenge lies in the misalignment between training data and deployment environments. For instance, consider a medical diagnosis system trained on historical patient records: it may falter when encountering novel symptom combinations during a disease outbreak or misalign with updated clinical guidelines (e.g., revised thresholds for "high-risk" biomarkers \cite{welsh2016prediction}). Such distribution shifts can significantly degrade the performance of LMs trained on limited training datasets, underscoring the importance of generalization for reliable deployment.

In this paper, we investigate how data collection impacts the generalization of LMs in compound tasks. Traditional approaches that prioritize direct instruction-result pairs (analogous to question-answering frameworks) optimize for task-specific accuracy but inadequately prepare models for compositional reasoning in unseen contexts. Recent advances in Chain-of-Thought (CoT) prompting \cite{wei2022chain}, which elicit intermediate reasoning steps, suggest that explicit process supervision can enhance generalization. However, acquiring high-quality CoT annotations at scale remains prohibitively expensive \cite{lightman2023let,kim2023cot}, raising a pivotal question: \textbf{How do different data collection strategies, \emph{e.g.}, result-oriented Q-A pairs and CoT—affect the generalization of LMs to novel compound tasks?}

To address this, we conduct controlled experiments using synthetic compound tasks that systematically introduce distribution shifts between training and evaluation environments. We evaluate LMs trained on two data paradigms: (1) result-oriented Q-A pairs and (2) CoT sequences of varying granularity. Our analysis reveals three critical insights(see Figure \ref{fig:average}):  (i) \textbf{Generalization Gap:} While both Q-A and CoT-trained LMs achieve near-perfect in-distribution accuracy, Q-A models exhibit severe performance degradation under distribution shifts—even with 10000k training examples. (ii) \textbf{Granularity-Generalization Tradeoff:} The granularity of CoT data strongly correlates with generalization performance; finer-grained CoT data leads to better generalization. (iii) \textbf{Sample Efficiency:} LMs trained with fine-grained CoT data demonstrate strong sample efficiency, achieving comparable generalization to Q-A pair training with substantially less data. These results suggest that even small amounts of high-quality CoT data can significantly enhance LM generalization, despite the challenges associated with its collection.

Inspired by the recent work \cite{liu2022transformers}, we further theoretically demonstrate that CoT training forces models to internalize valid reasoning paths. Leveraging transformer positional embeddings, we further show that explicitly emphasizing subtask conditions within CoT sequences further reduces shortcut reliance, especially in the long CoT setting.

 In summary, our paper makes the following key contributions: (1) We establish a controlled experimental framework to quantify the impact of data collection strategies on LM generalization under distribution shifts. Based on it, we demonstrate that LMs trained directly on question-answer pairs can achieve high accuracy on in-distribution data for new tasks but exhibit poor generalization, even when trained on substantial datasets (e.g., 10000k examples). (2) Through systematic scaling experiments, we demonstrate that fine-grained CoT data enhances generalization and sample efficiency, even with limited training data. (3) We provide theoretical insights into how CoT reasoning helps capture the dynamic state transitions of compound tasks, thereby improving generalization. Furthermore, we demonstrate that longer CoT chains, by repeatedly conditioning on sub-tasks, can further enhance the generalization capabilities of language models. We believe these findings offer a reliable guide for data collection practices when leveraging LMs for novel tasks.

\section{Related Work}
\vspace{-0.7em}
\paragraph{Generalization in LLMs}
Transformer-based language models \cite{vaswani2017attention} demonstrate strong performance on in-distribution tasks \cite{minaee2024largelanguagemodelssurvey,naveed2024comprehensiveoverviewlargelanguage,Xu_2024} but often struggle with OOD tasks due to challenges such as distribution shifts \cite{anil2022exploringlengthgeneralizationlarge,zhang2022delving}, shortcut learning \cite{liu2022transformers,geirhos2020shortcut}, and overfitting \cite{li2023transformers}, where the correlations they exploit no longer hold \cite{qian2022limitationslanguagemodelsarithmetic,nogueira2021investigatinglimitationstransformerssimple}.

Various approaches have been proposed to mitigate shortcut learning, including data augmentation \cite{hendrycks2020pretrainedtransformersimproveoutofdistribution,zhang2023unveilingtransformerslegosynthetic} and adversarial training \cite{jiang2019avoiding,taori2020when}, though many methods remain task-specific and lack generalization. Prior work \cite{chen2024alphamathzeroprocesssupervision} evaluates OOD performance by comparing models across datasets, but without precise control over distribution shifts. Our work introduces a fully controlled experimental setup, explicitly defining and manipulating shifts to better analyze model adaptation.
\paragraph{Chain-of-Thought Reasoning}
CoT reasoning enables multi-step derivations, allowing models to articulate reasoning \cite{wei2022chain,kojima2022large,zhu2024deductive,fu2022complexity,kim2024transformers}, though its mechanism remains unclear. Recent theoretical works apply circuit complexity theory to analyze CoT’s capabilities \cite{li2024chainthoughtempowerstransformers,abbe2024fartransformersreasonglobality,feng2023revealingmysterychainthought}, while other studies examine how CoT structures intermediate steps to decompose tasks \cite{ton2024understandingchainofthoughtllmsinformation,kudo2024thinktotalktalktothinkllmscome,yu2025llmsreallythinkstepbystep,yang2025chainofthought}.

CoT improves performance on tasks with shared reasoning patterns, even under distribution shifts \cite{li2024how,hu2024unveilingstatisticalfoundationschainofthought}. In zero-shot and few-shot settings, it leverages exemplars to apply pre-trained knowledge beyond training data \cite{kim2023cot}. Recent studies emphasize fine-grained CoT, where detailed intermediate steps enhance task learning and reduce shortcut reliance \cite{nguyen2023cof,chu-etal-2025-towards}.
\paragraph{Scaling Laws}
Scaling laws define the relationship between model size, data scale, and performance in LLMs \cite{kaplan2020scalinglawsneurallanguage,hoffmann2022trainingcomputeoptimallargelanguage,chowdhery2022palm}. However, scaling alone is insufficient under significant distribution shifts, as larger models may amplify shortcut learning or overfit to surface patterns \cite{micelibarone2022distributionallyrobustrecurrentdecoders}.
Optimizing the ratio of CoT reasoning in training data and validating its real-world effectiveness remain open challenges. Structured intermediate representations further aid in overcoming global obstacles and improving generalization \cite{abbe2024fartransformersreasonglobality}. Our findings indicate that incorporating CoT reasoning into the training process enhances the robustness of LLMs when addressing distribution shifts, particularly in out-of-distribution tasks.

\vspace{-0.7\baselineskip}
\section{Preliminary}
\vspace{-0.7em}
This paper investigates the generalization capabilities of transformer-based language models when applied to compound tasks characterized by dynamic dependencies among sub-tasks. Unlike standard sequential processing, compound tasks involve inter-related actions where the state of each action depends on a varying subset of previous actions. These dependencies are not static but evolve based on the task's progress, posing a unique challenge for models trained on sequential text. For example, consider preparing a complex meal with multiple dishes. The cooking state of pasta, for instance, depends on whether the sauce is freshly made or pre-made, demonstrating how dynamic dependencies can arise. This scenario reflects the intricate structure of many real-world tasks where a simple sequential reading fails to capture the underlying relationships between actions.

We formalize compound tasks through a state transition framework with dynamic dependencies, motivated by real-world scenarios like software development: subtask states (e.g., database setup, API integration) evolve through context-dependent interactions.
\subsection{Compound Task Structure}  
\vspace{-0.7em}
A compound task organizes complex operations through a hierarchical tree of subtasks. The structure comprises atomic actions at leaf nodes and interconnected subtasks at higher levels, with dynamic dependencies governing their execution flow. As illustrated in Figure \ref{fig:disbution_shift}, the system processes these tasks sequentially, evaluating dependencies and aggregating results from completed subtasks to achieve the overall objective. This flexible architecture allows for modification and reorganization of subtasks to accommodate evolving requirements.

\begin{figure}[t]
    \centering
        \includegraphics[width=0.9\linewidth]{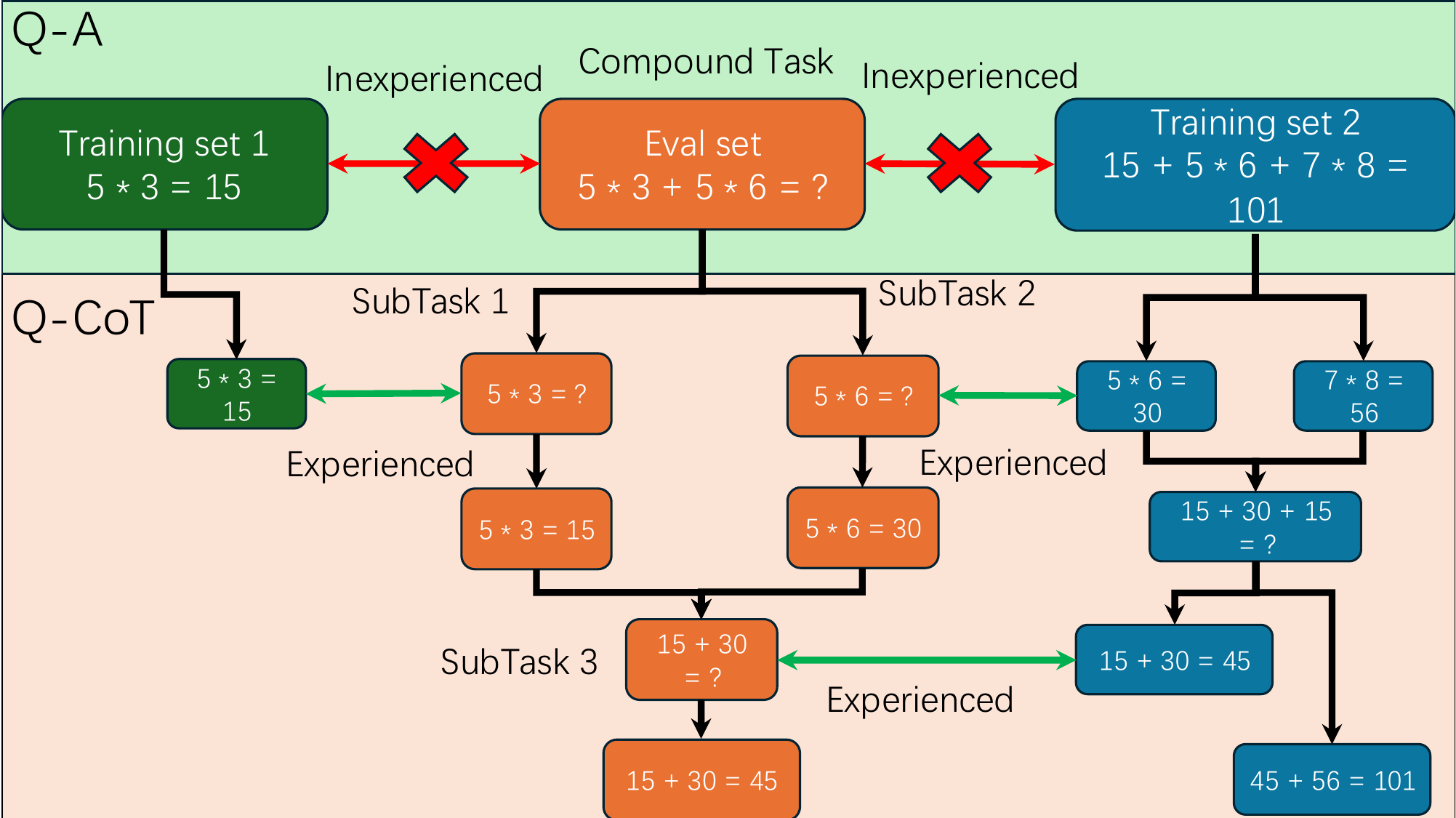}
    \vspace{-0.7\baselineskip}
    \caption{ Chain-of-thought alleviates distribution shift by breaking down complex problems into simpler, familiar sub-problems.}
    \label{fig:disbution_shift}
\end{figure}

To accommodate transformer models' input format requirements, we define the compound task at the token level. Figure \ref{fig:cp2} in Appendix provides a step-by-step illustration demonstrating how this definition integrates into the task structure.
\begin{definition}
\label{def:CP}
Given the input sequence $S = (s_1, \ldots, s_n)$ and subtask state sequence $Q = (q_1, \ldots, q_n)$ where $Q^i = (q_1, \ldots, q_i)$ denotes the prefix subsequence contains the first $i$ states of $Q$, CP($n$) is defined:
\setlength{\itemsep}{-1pt}
\item \textbf{Dynamic Graph Evolution:} At step $i+1$, the dependency graph updates with a function $B: S^{i+1} \times \mathbb{N} \to \mathcal{P}(\mathbb{N})$:
\begin{align}
    G_{i + 1} = G(q_1, \ldots, q_i \mid s_1, \ldots, s_{i+1})
    \notag\\= \{q_k \mid k \in B(s_1,\ldots,s_{i+1}, i+1)\}
\end{align}
\vspace{-0.1em}
where $B(s_1,\ldots,s_i, j)$ returns the set of indices that determine the dynamic dependencies based on the current state sequence and global semantic constraints.
\vspace{-0.4em}
\item \textbf{State Transition Function} $F: \mathcal{P}(Q^i) \times Q^i \to q$ computes the next state using selected predecessors from the prefix subsequence $Q^i$:
\begin{equation*}
    \begin{split}
        q_{i+1} &= F(G(q_1, \ldots, q_i \mid s_1, \ldots, s_{i+1}), s_{i+1}) \\
        &\text{where} \quad F(\emptyset) = \text{Constant}
    \end{split}
\end{equation*}
\vspace{-1em}
\item \textbf{Final Result Operation:} The system's final result is computed through a sequence of operations $L_i$ that aggregate states progressively: $L: q \times q \rightarrow R$
\begin{equation*}
\begin{split}
    L_1 &= H(\emptyset, q_1) = q_1 \\
    L_i &= H(L_{i-1}, q_i) \quad \text{for } i = 2,\ldots,N
\end{split}
\end{equation*}
\vspace{-0.2em}
where $H$ is an aggregation function like maximum, minimum, or summation function and $H(\emptyset, q) = q$ that can be specialized as needed.

\end{definition}
\vspace{-\baselineskip}

\subsection{Recap Conditions for Effective Chain of Thought} 
\vspace{-0.7em}
\label{sec:recap-theory}

Before introducing the Chain of Thought \cite{wei2022chain} approach, we establish two fundamental conditions shown in Fig~\ref{fig:recap_main} that enable more effective reasoning chains. We emphasize that our analysis in this section rests on \emph{finite floating-point precision constraints} of practical implementations rather than on idealized theoretical attention limits.\newline
1. \textbf{Outside Window Recap Condition:} Due to finite precision constraints, only tokens with the top-k attention scores can be effectively recalled. When the distance between the target token position and current position exceeds a threshold, the attention scores become negligible within the given precision, preventing recall of tokens outside this window. This necessitates strategic recapping of tokens within the window to maintain information flow. See Appendix \ref{thm: Recap 1} for formal analysis.
\newline
2. \textbf{Inside Window Recap Condition:} For optimal model performance, tokens within the attention window should be organized following the ground truth causal order, avoiding irrelevant intermediary tokens that could impede convergence. As proven in Appendix \ref{thm: Recap 2}, including irrelevant tokens in the sequence slows down model convergence. Therefore, we must carefully structure the token sequence within the window (e.g., through techniques like reverse ordering or selective token repetition) to align with the underlying causal structure.
\tikzset{
    basic/.style={font=\sffamily, align=center},
    token/.style={
        basic,
        draw=gray!40,
        fill=gray!15,
        rounded corners=3pt,
        minimum width=0.8cm,
        minimum height=0.8cm,
        inner sep=1pt,
        drop shadow={opacity=0.08, shadow xshift=1pt, shadow yshift=-1pt},
        font=\small\sffamily
    },
    highlight/.style={token, draw=red!80, fill=red!5, line width=1pt},
    copied/.style={token, draw=teal!80, fill=teal!10, line width=1pt},
    window_box/.style={draw=blue!60, line width=1.5pt, fill=blue!5, rounded corners=5pt},
    dep_arrow/.style={->, >={Stealth[length=2mm]}, draw=red!70, line width=0.8pt, dashed, shorten >=2pt, shorten <=2pt},
    recap_arrow/.style={->, >={Latex[length=3mm]}, draw=teal!80!black, line width=1.2pt, opacity=0.6, shorten >=3pt, shorten <=3pt},
    llm_box/.style={draw=blue!40!black, top color=blue!5, bottom color=white, rounded corners, minimum width=6cm, minimum height=1cm, font=\bfseries\sffamily, drop shadow},
    brace_style/.style={decorate, decoration={calligraphic brace, amplitude=8pt, raise=5pt}, very thick, pen colour={gray!80}}
}
\begin{figure}[h]
    \centering
    
    \subfigure[\textbf{Outside Window Recap:} $s_1$ is copied into the window. \label{fig:outside_recap}]{%
        \resizebox{0.48\textwidth}{!}{%
        \begin{tikzpicture}[node distance=1.5cm]
            \node[basic, anchor=east, font=\bfseries] at (0, 2.8) {Before Recap};
            \node[basic, anchor=west, font=\footnotesize, text=blue!80] at (9, 2.8) {Window (Size 5)};

            \foreach \i/\t in {1/s1, 2/s2, 3/s3, 4/s4, 5/s5, 6/s6, 7/s7, 8/s8} {
                \node[token] (b\i) at (\i*1.1, 2.3) {\t};
            }
            \node[highlight] at (b1) {s1}; 
            \node[highlight] at (b5) {s5};

            \begin{scope}[on background layer]
                \node[window_box, fit=(b3)(b7)] (win_before) {};
            \end{scope}

            \draw[dep_arrow, bend left=50] (b1.north) to node[midway, above=1pt, font=\tiny, text=red] {dependency} (b5.north);

            \node[basic, anchor=east, font=\bfseries] at (0, 0) {After Recap};
            
            \node[token] (a1) at (1.1, -0.5) {s1};
            \node[token] (a2) at (2.2, -0.5) {s2};
            \node[token] (a3) at (3.3, -0.5) {s3};
            \node[token] (a4) at (4.4, -0.5) {s4};
            \node[copied] (a_copy) at (5.5, -0.5) {s1};
            \node[highlight] (a5) at (6.6, -0.5) {s5};
            \node[token] (a6) at (7.7, -0.5) {s6};
            \node[token] (a7) at (8.8, -0.5) {s7};
            \node[token] (a8) at (9.9, -0.5) {s8}; 

            \begin{scope}[on background layer]
                \node[window_box, fit=(a4)(a7)] (win_after) {};
            \end{scope}

            \draw[recap_arrow, out=-90, in=110] (b1.south) to node[midway, fill=white, inner sep=0pt, font=\tiny, text=teal] {copy} (a_copy.north);
            \draw[dep_arrow, bend left=50] (a_copy.north) to (a5.north);

            \draw[brace_style] (a8.south east) -- (a1.south west) node[midway, below=20pt] (brace_center) {};
            \node[llm_box, below=0.3cm of brace_center] (llm) {Large Language Model};
            \draw[
              ->,
              line width=1.6pt,
              black!65,
              >={Stealth[length=4mm]},
              line cap=round,
              shorten >=2pt,
              shorten <=2pt
            ]
            ([yshift=3mm]brace_center.center) -- (llm.north);
        \end{tikzpicture}%
        }%
    }
    \hfill
    \subfigure[\textbf{Inside Window Recap:} $s_1$ and $s_3$ are copied closer to $s_6$. \label{fig:inside_recap}]{%
        \resizebox{0.48\textwidth}{!}{%
        \begin{tikzpicture}[node distance=1.5cm]
            \node[basic, anchor=east, font=\bfseries] at (0, 2.8) {Before Recap};
            \node[basic, anchor=west, font=\footnotesize, text=blue!80] at (9, 2.8) {Window (Size 11)};

            \foreach \i/\t in {1/s1, 2/s2, 3/s3, 4/s4, 5/s5, 6/s6, 7/s7, 8/s8} {
                \node[token] (bb\i) at (\i*1.1, 2.3) {\t};
            }
            \node[highlight] at (bb1) {s1};
            \node[highlight] at (bb3) {s3};
            \node[highlight] at (bb6) {s6};

            \begin{scope}[on background layer]
                \node[window_box, fit=(bb1)(bb8)] (win_b_right) {};
            \end{scope}

            \draw[dep_arrow, bend left=60] (bb1.north) to (bb6.north);
            \draw[dep_arrow, bend left=45] (bb3.north) to (bb6.north);

            \node[basic, anchor=east, font=\bfseries] at (0, 0) {After Recap};

            \node[token] (aa1) at (1.1, -0.5) {s1};
            \node[token] (aa2) at (2.2, -0.5) {s2};
            \node[token] (aa3) at (3.3, -0.5) {s3};
            \node[token] (aa4) at (4.4, -0.5) {s4};
            \node[token] (aa5) at (5.5, -0.5) {s5};
            \node[copied] (aa_c1) at (6.6, -0.5) {s1};
            \node[copied] (aa_c3) at (7.7, -0.5) {s3};
            \node[highlight] (aa6) at (8.8, -0.5) {s6};
            \node[token] (aa7) at (9.9, -0.5) {s7};
            \node[token] (aa8) at (11.0, -0.5) {s8}; 

            \begin{scope}[on background layer]
                \node[window_box, fit=(aa1)(aa6)] (win_a_right) {};
            \end{scope}

            \draw[recap_arrow, out=-100, in=130] (bb1.south) to (aa_c1.north);
            \draw[recap_arrow, out=-50, in=110] (bb3.south) to (aa_c3.north);

            \draw[dep_arrow, bend left=60] (aa_c1.north) to (aa6.north);
            \draw[dep_arrow, bend left=45] (aa_c3.north) to (aa6.north);

            \draw[brace_style] (aa8.south east) -- (aa1.south west) node[midway, below=20pt] (brace_center2) {};
            \node[llm_box, below=0.3cm of brace_center2] (llm2) {Large Language Model};
            \draw[
              ->,
              line width=1.6pt,
              black!65,
              >={Stealth[length=4mm]},
              line cap=round,
              shorten >=2pt,
              shorten <=2pt
            ]
            ([yshift=3mm]brace_center2.center) -- (llm2.north);
        \end{tikzpicture}%
        }%
    }
    
\caption{\textbf{Recap mechanism visualization.}
Selected tokens are copied to restructure the input sequence so that relevant dependencies fall within the effective attention window before being processed by the LLM.}
    \label{fig:recap_main}
\end{figure}
Combined with these two recap conditions, for each inference step in a compound task, we must ensure all dependent tokens are both compactly positioned and arranged in their causal order. This strategy enables effective information flow while maintaining computational efficiency. In implementation, since tokens may have multiple dependencies, it is preferable to copy tokens rather than move them.

\subsection{CoT Captures the Dynamic State Transitions}
\vspace{-0.7em}
Inspired by recent work \cite{li2024chainthoughtempowerstransformers, feng2023revealingmysterychainthought} and building upon our recap conditions, we now introduce a CoT formalization specifically designed to capture the dynamic state transitions in compound tasks. Figure \ref{fig:disbution_shift} conceptually illustrates how CoT helps to generalize. Rather than allowing the model to learn superficial shortcuts, our CoT format explicitly tracks the evolving dependencies and state transitions that characterize the ground truth recurrent solution.

\begin{definition}[Chain of Thought]
  \vspace{0em}
    \small\begin{align*}
    \vspace{-0.5em}
        \text{Input:} & \; s_1 \mid \cdots \mid s_N \\[1ex]
        \text{CoT Steps:} & \; \langle\text{sep}\rangle \; s_2 \mid G_2 \mid q_2 \mid L_1 \mid L_2 \\
        & \; \langle\text{sep}\rangle \; \cdots \\
        & \; \langle\text{sep}\rangle \; s_{N} \mid G_N \mid q_N \mid L_{N-1} \mid L_N
    \end{align*}
    \label{def:CoT}
\end{definition}
\vspace{-\baselineskip}
This format structures each step to track dependencies ($G_i$), states ($q_i$), and intermediate results ($L_i$), guiding the model toward learning true solution dynamics. Remarkably, this principled approach can be implemented with minimal architectural requirements:

\begin{proposition}
\label{thm:CoT}
For any compound problem satisfying Definition \ref{def:CP}, and for any input length bound $n \in \mathbb{N}$, there exists an autoregressive Transformer with: 1.Constant depth $L$, 2.Constant hidden dimension $d$, 3.Constant number of attention heads $H$
where $L$, $d$, and $H$ are independent of $n$, such that the Transformer correctly generates the Chain-of-Thought solution defined in Definition \ref{def:CoT} for all input sequences of length at most $n$. Furthermore, all parameter values in the Transformer are bounded by $O(\text{poly}(n))$.
\end{proposition}


\section{Experiment}
\vspace{-0.7em}
In order to investigate how data collection methods impact LMs' generalization ability, we conducted controlled studies on three compound reasoning tasks with systematic distribution shifts. Our experiments address three key questions: (1) How does direct QA training compare to CoT in OOD generalization? (2) Can increasing data help direct QA training to generalize to OOD data? (3) Can repeat the conditions for each subtask of one compound task help generalization?
\subsection{Setting}
\vspace{-0.7em}
\subsubsection{Task and Experiment setting}
\vspace{-0.7\baselineskip}
We evaluate the generalization ability of LMs through three compound reasoning tasks:
(1) Longest Increasing Subsequence (LIS),
(2) Multi-Step Path Counting (MPC), and
(3) Equation Restoration with Variable Computation (ERVC).
Representative examples for all tasks are provided in Appendix
\ref{app:LIS}, \ref{app:MPC}, and \ref{app:ERVC}.
\vspace{-0.7\baselineskip}

\paragraph{Longest Increasing Subsequence (LIS):}
Given an input sequence $X^n$ of integers, compute the length of the longest increasing subsequence.
The model is trained on sequences of complexity levels $n_1=4$ and $n_2=16$, and evaluated on
$n_3=10$, testing generalization across sequence lengths and positional dependencies.
\vspace{-0.7\baselineskip}

\paragraph{Multi-Step Path Counting (MPC):}
Given an input sequence $X^n$ specifying available steps (0 for forbidden, 1 for available),
compute the number of unique paths to reach position $n$ using steps of size $\{1,2,3\}$.
Training uses $n_1=20$ and $n_2=40$, and evaluation uses $n_3=30$, testing generalization
under varying combinatorial constraints.
\vspace{-0.7\baselineskip}

\paragraph{Equation Restoration and Variable Computation (ERVC):}
Given observations of $n$ variables and their values, recover $m$ underlying linear equations,
then compute target variables under new assignments.
Training and testing involve distribution shifts in both $n$ and $m$.

\subsubsection{Training Dataset Preparation}
\vspace{-0.7\baselineskip}
For each task, we created datasets in two formats:
\textbf{1. Question-Answer (Q-A):} Questions paired directly with final answers, representing supervision without reasoning steps (CoT-0\%).
\textbf{2. Question-Chain-of-Thought (Q-CoT):} Questions paired with step-by-step CoT explanations leading to answers, providing process supervision (CoT-100\%).
To investigate the impact of partial CoT supervision, we also conducted ablation studies using probabilistic CoT dropout. In these settings, each step within a complete CoT demonstration has a probability of being retained in the training data. We explored dropout rates of 30\%, 50\%, and 70\% (CoT-30\%, CoT-50\%, CoT-70\%).  We focus on probabilistic dropout as preliminary experiments with fixed-portion CoT dropout showed significant OOD performance degradation.

For scaling experiments, we varied the dataset size from 1k to 30k samples for both Q-A and Q-CoT regimes. Models were trained on the ID data splits and evaluated on both ID and OOD splits to measure generalization gaps.

\begin{figure}
    \centering
    \subfigure[Multi-Step Path Count (MPC)]{
        \includegraphics[width=0.9\linewidth]{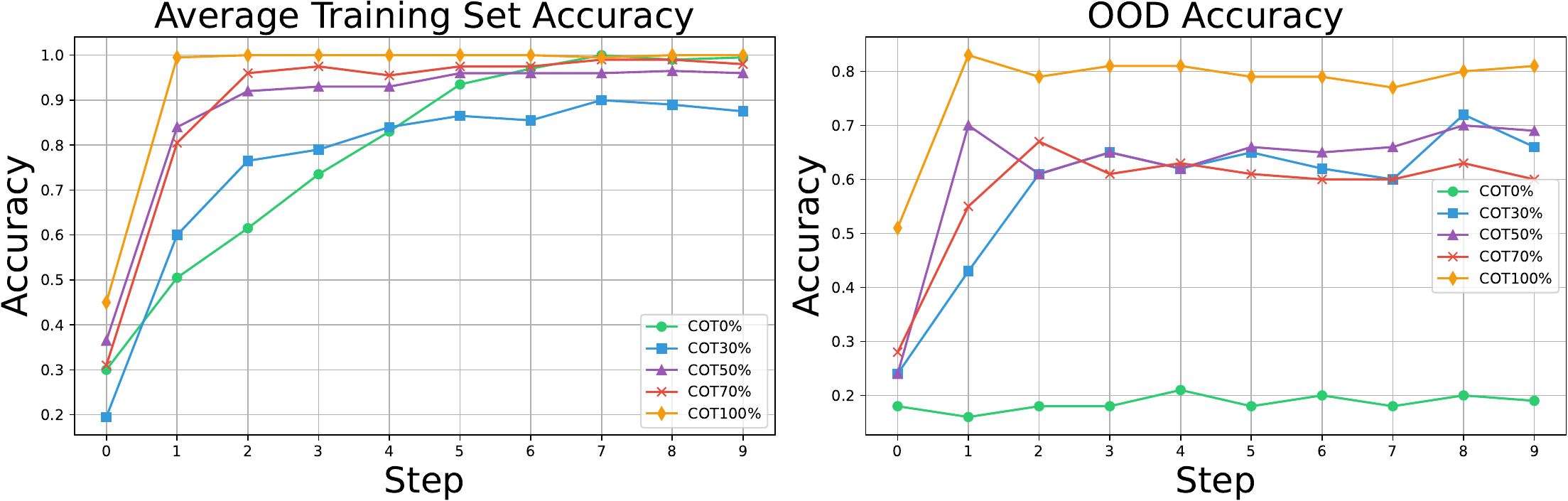}
        \label{fig:ratio_3step}
    }\vfill
    \vspace{-0.5em}
    \subfigure[Longest Increasing Sequence (LIS)]{
        \includegraphics[width=0.9\linewidth]{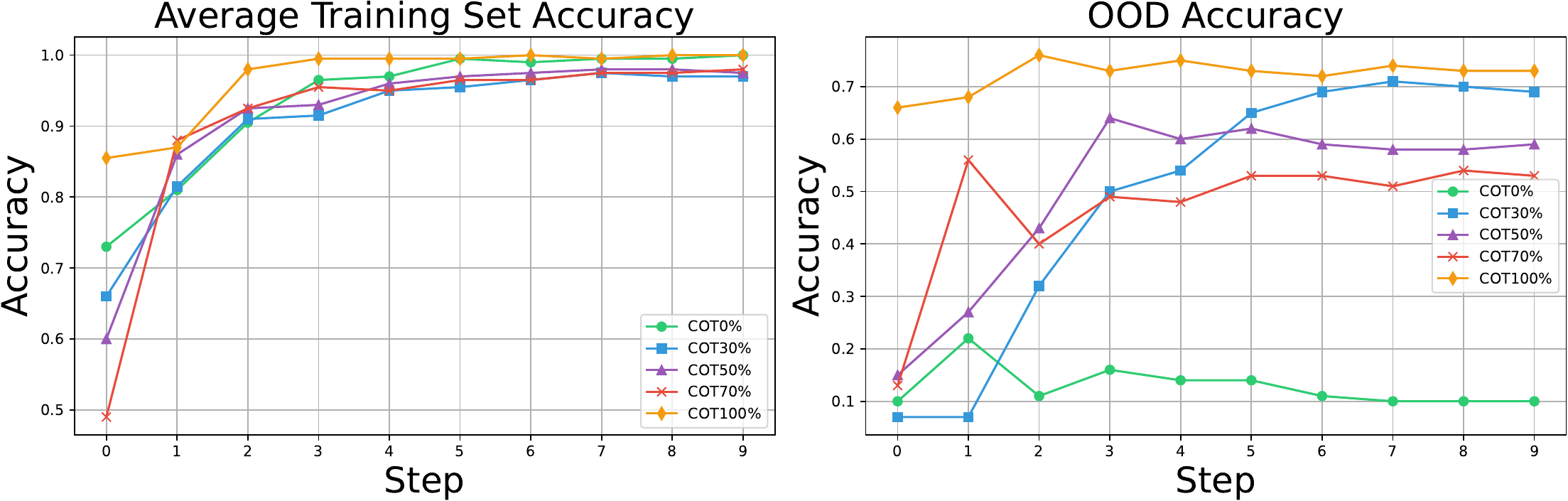}
        \label{fig:lis_analysis}
    }\vfill
    \vspace{-0.5em}
    \subfigure[Equation Restoration and Variable Computation (ERVC21-43)]{
        \includegraphics[width=0.9\linewidth]{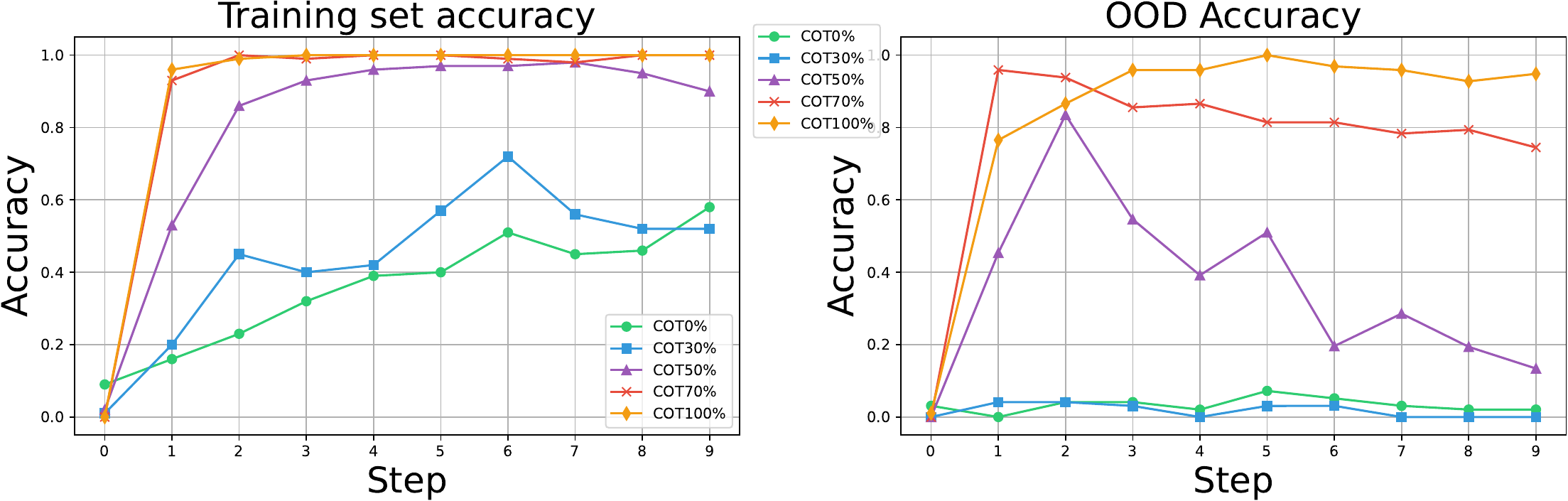}
        \label{fig:dec24}
    }\vfill
    \vspace{-0.5em}
    \subfigure[Equation Restoration and Variable Computation (ERVC21-41)]{
        \includegraphics[width=0.9\linewidth]{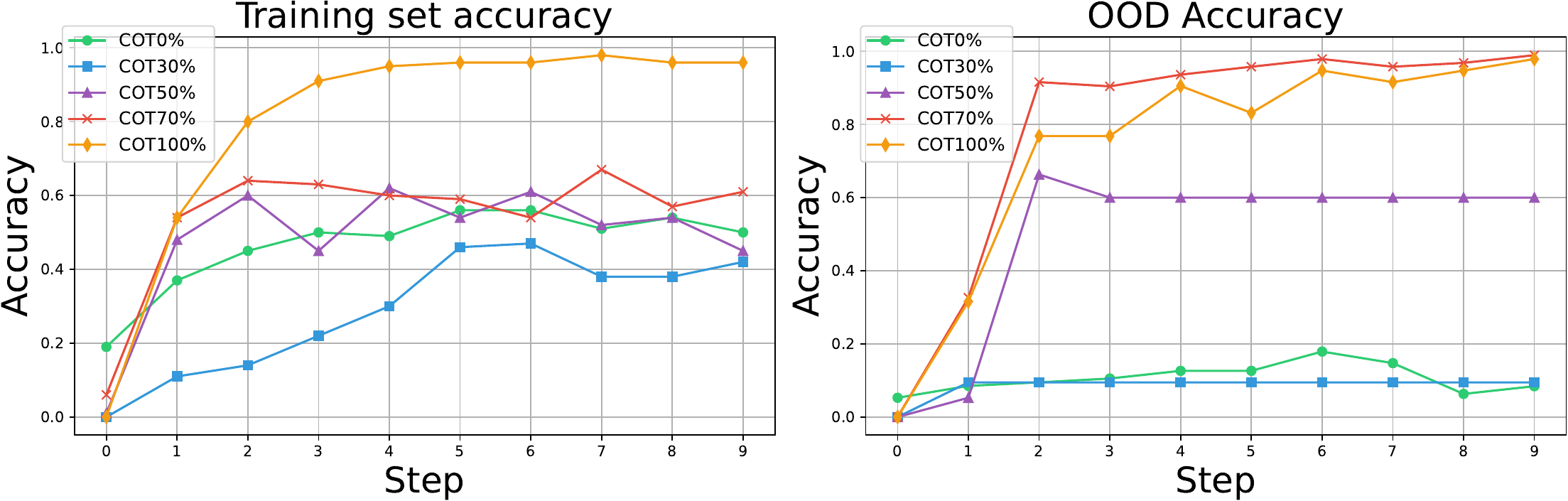}
        \label{fig:dec14}
    }
    \vspace{-0.7\baselineskip}
    \caption{Impact of CoT granularity on In-Distribution and Out-of-Distribution performance across four tasks. Left panels show ID accuracy, right panels show OOD accuracy. While non-CoT models achieve high ID accuracy, they demonstrate significantly lower OOD generalization performance.}
    \vspace{-2em}
    \label{fig:combined}
\end{figure}

\subsubsection{Model Training and Evaluation}
\vspace{-0.7\baselineskip}
For the Longest Increasing Subsequence and Multi-Step Path Counting tasks, we implemented a 6-layer transformer architecture trained from scratch, featuring an embedding size of 256/512 and 16 attention heads. The Equation Restoration tasks were approached differently, utilizing the Phi-3.5-mini-instruct model as the foundation for task-specific fine-tuning. All model variants incorporate Rotary Position Embedding.
In evaluating model performance, we employed different strategies for Q-A and Q-CoT data. For Q-A data in both the Longest Increasing Subsequence and Multi-Step Path Counting tasks, correctness is determined by comparing the token immediately preceding the \texttt{<eos>} token. For Q-CoT data, our evaluation extends beyond the final answer, incorporating intermediate computational tokens (such as $Q$ and $L$) that contribute to the solution. This comprehensive evaluation approach helps mitigate cases where incorrect reasoning processes might accidentally yield correct final answers through guessing.

\subsection{Results Analysis}

\begin{table}[htbp]
\vspace{-1em}
\caption{Performance comparison across MPC, LIS, ERVC tasks, where ID denotes the in-distribution task, OOD denotes out-of-distribution task, ID-OOD denotes the different between ID and OOD performance. }
    \centering
    \resizebox{0.9\columnwidth}{!}{
        \begin{tabular}{ccccccccc}
            \hline
            \multicolumn{1}{c}{\multirow{2}{*}{Task}} & \multicolumn{2}{c}{MPC} & \multicolumn{2}{c}{LIS} & \multicolumn{2}{c}{ERVC21-43} & \multicolumn{2}{c}{ERVC21-41} \\ \cline{2-9} 
            \multicolumn{1}{c}{} & Q-A & Q-CoT & Q-A & Q-CoT & Q-A & Q-CoT & Q-A & Q-CoT \\ \hline
            ID $\uparrow$ & 0.83 & 1.0 & 0.97 & 0.995 & 0.39 & 1.0 & 0.49 & 0.95 \\
            OOD $\uparrow$ & 0.21 & 0.81 & 0.14 & 0.75 & 0.02 & 0.959 & 0.126 & 0.905 \\
            ID-OOD $\downarrow$ & 0.62 & 0.19 & 0.83 & 0.245 & 0.37 & 0.041 & 0.364 & 0.045 \\ \hline
        \end{tabular}
    }
    \label{tab:performance gap}
     \vspace{-1em}
\end{table}

\subsubsection{Generalization Gap and Scaling Curves}
\begin{figure}[]
\centering
\subfigure[Data Scaling Curves in MPC task]{
    \includegraphics[width=0.8\linewidth]{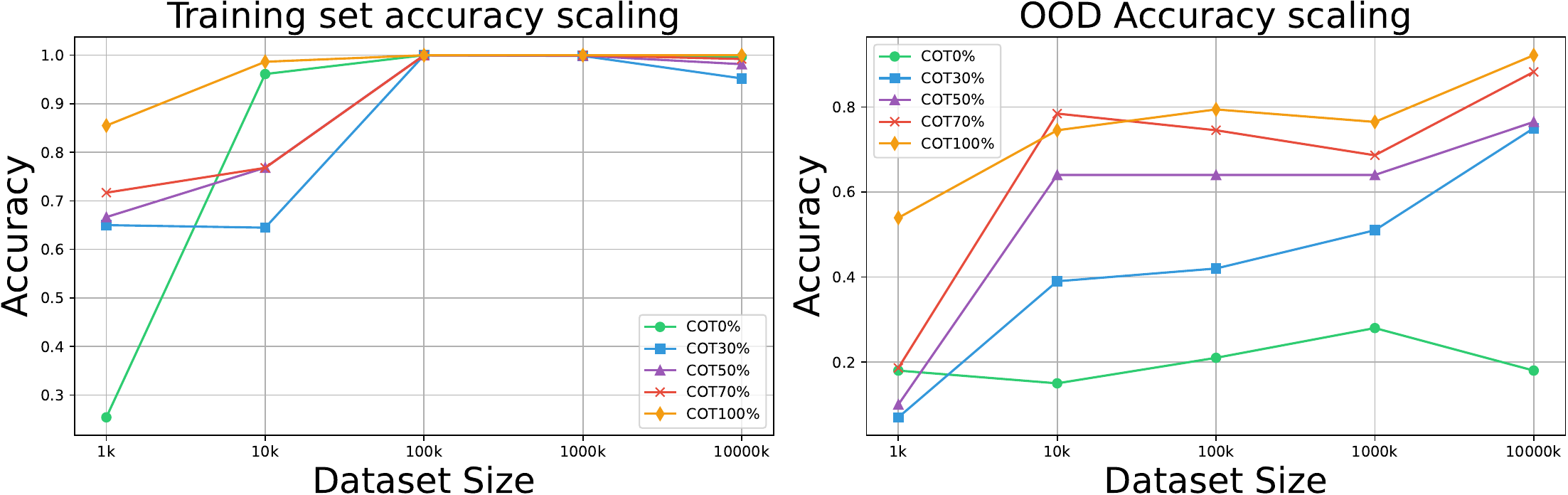}
}\hfill
\subfigure[Data Scaling Curves in LIS task]{
    \includegraphics[width=0.8\linewidth]{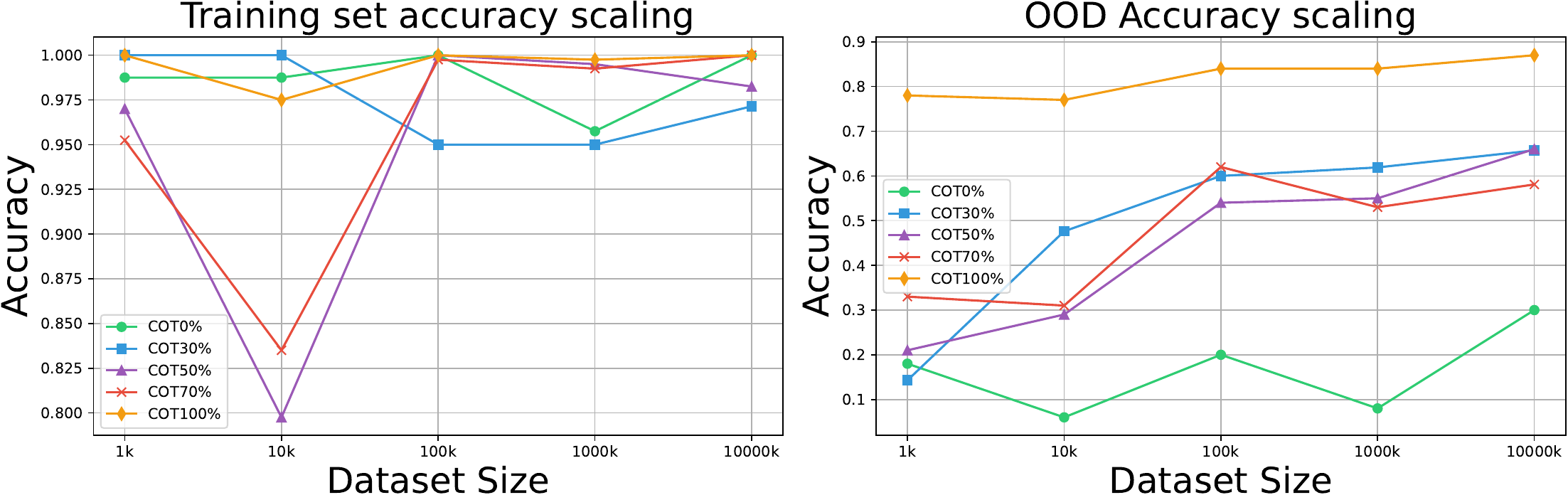}
    \label{fig:lis}
}   
\vspace{-0.7\baselineskip}
\caption{The illustration of the impact of data scale on ID and OOD performance in the MPC and LIS tasks. Left: ID performance. Right: OOD performance. We can clearly see that even when the training data increases to 10M, LMs trained without CoT can exhibit near-perfect performance on training tasks, yet they still achieve only about ~30\% accuracy on OOD data.}
\label{fig:scaling_results}
\end{figure}
\vspace{-0.7em}
Figure \ref{fig:scaling_results} and Table \ref{tab:performance gap} show accuracy scaling curves for MPC and LIS as dataset size increases, comparing CoT ratios (0\%, 30\%, 50\%, 70\%, 100\%). Consistent with Theorem \ref{thm:kl-reduction} in Appendix~\ref{app:shift_ana}, models trained with higher CoT (70–100\%) achieve better OOD generalization than direct QA (0\%), with faster gains and higher sustained accuracy.

For Q-A (0\%), OOD performance plateaus quickly, revealing poor generalization. In contrast, Q-CoT models continue improving with scale, reducing the gap.
Partial CoT (30–50\%) degrades OOD performance, supporting the claim that incomplete reasoning chains weaken prefix coverage, increase KL divergence, and harm generalization.

Overall, the results validate the theory: CoT supervision improves OOD generalization by encoding reasoning steps, effectively transforming OOD problems into in-distribution-like ones.

\subsubsection{Granularity-Generalization Tradeoff}
\vspace{-0.7em}
Figure \ref{fig:combined} demonstrates a clear relationship between CoT step coverage and OOD generalization performance across tasks. Specifically, models trained with more complete CoT steps exhibit superior generalization capabilities on OOD samples. To formally characterize this empirical observation, we develop a theoretical framework in with a simple linear model that quantitatively explains how the omission of CoT steps during training systematically degrades model performance. Our analysis reveals that the degradation follows an exponential decay pattern with respect to the number of missing steps, highlighting the critical importance of maintaining granular reasoning steps for robust generalization. 

\subsubsection{Data Scaling Experiments}
\vspace{-0.7em}
To investigate the effect of training data scaling on generalization, we examined the performance trends as dataset size increased. As depicted in Figure~\ref{fig:scaling_results}, a key observation emerges: models trained without CoT supervision exhibit limited improvement in OOD generalization despite substantial increases in training data size (up to 10,000k samples). While ID accuracy improves with larger datasets for these models, their OOD performance plateaus, indicating that even the increased data cannot help generalization to novel compound tasks with direct Q-A training. In stark contrast, models trained with CoT supervision demonstrate a distinct capacity to translate larger datasets into improved OOD generalization. 

\subsubsection{Sample Efficiency}
\vspace{-0.7em}
As shown in Figure \ref{fig:scaling_results}, models trained with a higher percentage of CoT examples demonstrate superior sample efficiency in both MPC and LIS scaling experiments. Specifically, models with $70\%$ and $100\%$ CoT (CoT-$70\%$ and CoT-$100\%$) achieve a higher OOD accuracy with significantly smaller dataset sizes compared to models trained with lower CoT percentages. This effect is particularly pronounced in the small data regime (1k-10k samples), where CoT-$100\%$ maintains a relatively robust OOD performance (0.55-0.75 accuracy) while models with less CoT supervision struggle (below 0.4 accuracy). This suggests that incorporating more reasoning steps through CoT not only improves overall performance but also enables more efficient learning from limited training data.
\subsubsection{Recap Condition Ablation}
\begin{table}[h]
\vspace{-1em}
\caption{Model performance comparisons on the with and without recap condition on the Equation Restoration Task}
 \vspace{-0.5em}
    \centering
    \resizebox{0.7\columnwidth}{!}{ %
\begin{tabular}{ccccc}
\hline
Task                & \multicolumn{2}{c}{ERVC21-43} & \multicolumn{2}{c}{ERVC21-41} \\ \hline
Dataset               & Training     & OOD       & Training     & OOD       \\ \hline
w/ recap condition  & 0.974        & 0.947     & 0.997        & 0.952     \\
w/o recap condition & 0.436        & 0.126     & 0.638        & 0.558     \\ \hline
\end{tabular}
 }
\label{tab:model-comparison}
\vspace{-1.em}
\end{table}
We investigate the impact of the recap condition on model performance in Equation Restoration tasks in Table \ref{tab:model-comparison}. For ERVC21-43, models trained with the recap condition achieve a high training accuracy of 0.974 and maintain this level of performance on the OOD set (0.974).  In contrast, models trained without the recap condition exhibit significantly lower accuracy on both the training set (0.436) and the OOD set (0.126). A similar trend is observed for ERVC21-41. The model with the recap condition achieves near-perfect training accuracy (0.997) and strong OOD performance (0.952).  Conversely, the model without the recap condition shows substantially reduced accuracy in both training (0.638) and OOD settings (0.558). These results strongly suggest that the recap condition of each subtask plays a critical role in the generalization of LMs and and validate our theoretical propositions outlined in Section \ref{sec:recap-theory}

\section{Limitation}
\vspace{-0.7em}
Limitations We acknowledge three limitations in our study. First, to rigorously isolate the impact of reasoning density, we rely on synthetic tasks; quantifying "granularity" in unstructured natural language remains an open challenge. Second, there is a distinct trade-off in efficiency: while finer-grained CoT achieves sample efficiency (training parsimony), it inevitably increases inference costs (compute parsimony) due to longer sequence generation. Third, we focus on fundamental Transformer dynamics; while our theoretical analysis suggests our findings are structurally invariant, we have not empirically verified these specific scaling laws on massive 100B+ parameter foundation models.

\section{Conclusion}
\vspace{-0.7em}
In this paper, through controllable systematic experimentation and theoretical analysis in compound tasks, we demonstrate that (1) Direct QA-trained models exhibit dramatic performance degradation under shifts despite high in-distribution accuracy with even 100k data in the single task, (2) CoT granularity directly governs generalization strength, with fine-grained reasoning steps enabling higher OOD accuracy, and (3) CoT training achieves comparable performance to QA paradigms with much less data, underscoring its sample efficiency. Theoretically, we prove that CoT enforces structured reasoning paths, a process amplified by transformer positional embeddings through subtask condition recurrence, and these trends are visually corroborated in Appendix Figure~\ref{fig:ood_detail}, where models trained with 100\% CoT maintain high accuracy across varying sequence lengths while direct QA models collapse under OOD conditions despite strong in-domain performance. These findings provide a reliable guide for data collection practices when leveraging LMs for novel tasks. We will explore automated CoT generation and dynamic granularity adaptation to further reduce annotation costs in the future work.

\newpage
\bibliography{reference}

\newpage
\appendix
\onecolumn
\section{Explain Compound task in formal definition}
As shown in the figure, original tree like data structure can be converted to an array which is easily feeded into language model.
\begin{figure}
    \centering
    \includegraphics[width=1\linewidth]{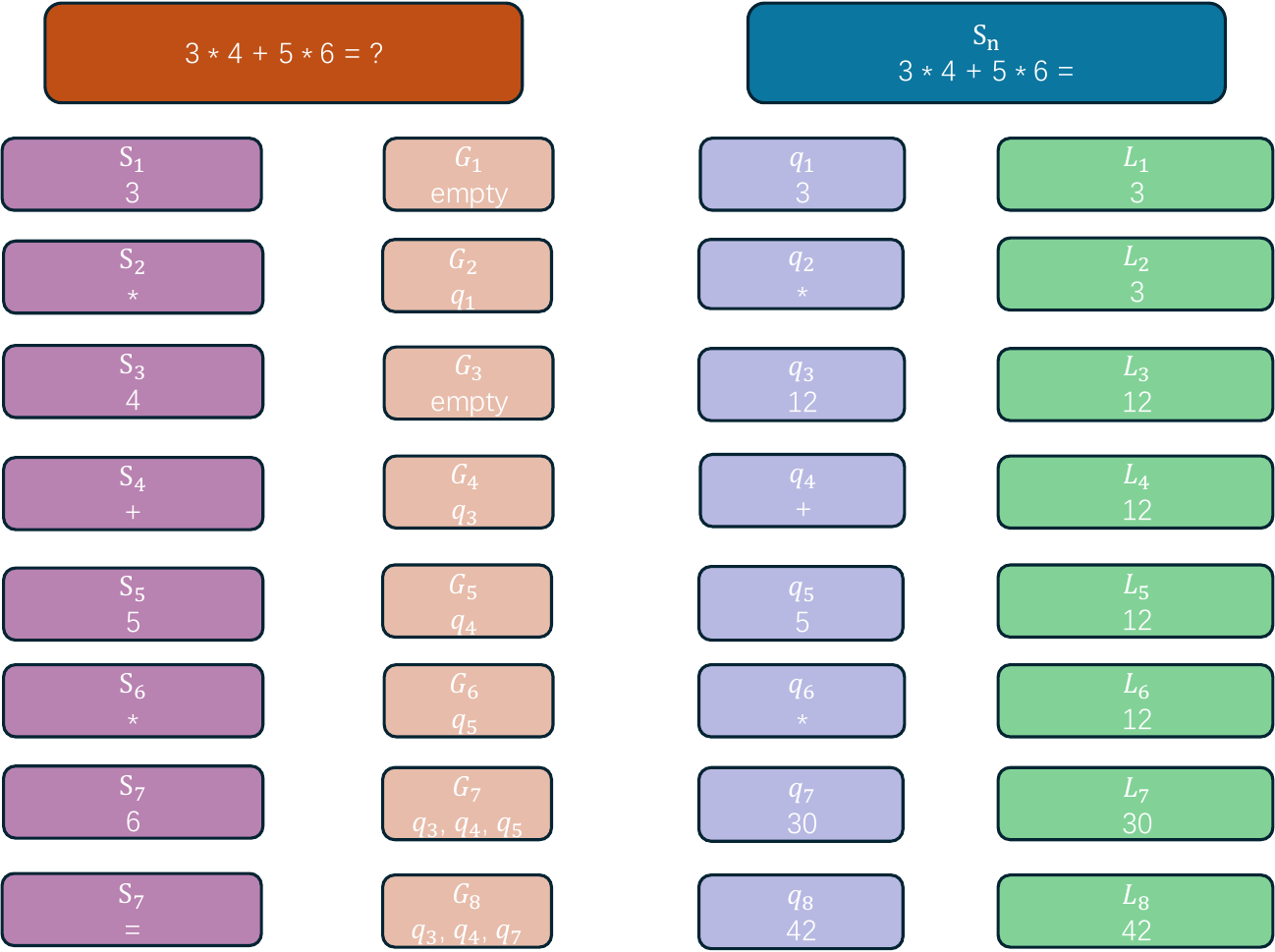}
    \caption{Step by Step explanation on definition \ref{def:CP}}
    \label{fig:cp2}
\end{figure}
\section{Code and Data}
We provide an anonymous page, you can following the instruction to generate data, training model and evaluation.
\url{https://github.com/physicsru/Scaling-Curves-of-CoT-Granularity-for-Language-Model-Generalization}

\section{Recap Condition Analysis}

\begin{theorem}[Outside Token Recap Condition under RoPE]
\label{thm: Recap 1}
Consider a transformer with Rotary Positional Embedding (RoPE) using angles $\theta_j = 10000^{-2j/d_{\text{model}}}$. Given finite computational precision $s$ and minimum resolvable attention score $\epsilon$, there exists a threshold distance $\tau > 0$ such that for all positional distances $d > \tau$:
\[
|A(d)| < \epsilon,
\]
where $A(d)$ is the attention score between tokens at distance $d$. Consequently, tokens beyond $[i_{\text{current}} - \tau, i_{\text{current}} + \tau]$ cannot be recalled.

\begin{proof}
\noindent Step 1: Attention Score Formulation
The RoPE attention score between positions $m$ and $n$ (distance $d = |m-n|$) is:
\[
A(d) = \text{Re}\left[\sum_{j=0}^{d_{\text{model}}/2-1} h_j e^{\mathbf{i}d\theta_j}\right], \quad h_j := q_{[2j:2j+1]}\mathbf{k}^*_{[2j:2j+1]}
\]
where $h_j$ encodes query-key interactions for the $j$-th dimension pair.

\noindent Step 2: Abel Transformation\cite{men2024baseropeboundscontext}
Let $S_{j+1} = \sum_{k=0}^j e^{\mathbf{i}d\theta_k}$ with $S_0 = 0$. Using summation by parts:
\[
\sum_{j=0}^{d_{\text{model}}/2-1} h_j e^{\mathbf{i}d\theta_j} = \sum_{j=0}^{d_{\text{model}}/2-1} (h_j - h_{j+1}) S_{j+1}
\]
Taking absolute values:
\[
|A(d)| \leq \sum_{j=0}^{d_{\text{model}}/2-1} |h_{j+1} - h_j| \cdot |S_{j+1}|
\]

\bigskip
\noindent Step 3: Bounding Query-Key Differences.
Assume that the query and key representations are uniformly bounded so that $\|q\|,\,\|k\| \le C$. In particular, since each $h_j$ results from a dot product between sub-vectors from $q$ and $k$, we have $|h_j| \le C^2$. Moreover, if we assume that the embeddings vary smoothly with the index $j$ (as expected from the continuity of underlying network nonlinearities and weight matrices), then the mean value theorem implies that the difference
\[
|h_{j+1} - h_j|
\]
is bounded by a Lipschitz constant. That is, there exists a constant $M = \mathcal{O}(C^2)$ such that for every $j$
\[
|h_{j+1} - h_j| \le M.
\]
A more refined analysis might track this difference in terms of the network’s smoothness, but the key point is that each difference is uniformly bounded by a constant depending on $C$.

\bigskip
\noindent Step 4: Analyzing Oscillatory Sums.
We now study the partial sum
\[
S_{j+1} = \sum_{k=0}^j e^{\mathbf{i}d\theta_k}.
\]
Two regimes are considered:

\emph{(i) Low-frequency regime ($k \le j_0$):}  
For sufficiently small indices $k$, we have $\theta_k$ being relatively large so that
\[
d\theta_k \gg 1.
\]
In this regime the phases $e^{\mathbf{i}d\theta_k}$ change rapidly with $k$, leading to cancellations among the terms. A standard bound for such oscillatory sums yields
\[
\Big|\sum_{k=0}^{j_0} e^{\mathbf{i}d\theta_k}\Big| \le \frac{2}{\left|1 - e^{\mathbf{i}d\theta_0}\right|} = \mathcal{O}(1),
\]
since the denominator remains bounded away from zero when $d\theta_0$ is large.

\emph{(ii) High-frequency regime ($k > j_0$):}  
For larger $k$, the angle $\theta_k$ becomes very small (because $\theta_k$ decays exponentially with $k$), so that $d\theta_k \ll 1$. In this case we can use the first-order Taylor expansion:
\[
e^{\mathbf{i}d\theta_k} \approx 1 + \mathbf{i}d\theta_k.
\]
Thus, for indices $j>j_0$, the sum becomes approximately
\[
\sum_{k=j_0+1}^j e^{\mathbf{i}d\theta_k} \approx (j-j_0) + \mathbf{i}d\sum_{k=j_0+1}^j \theta_k.
\]
While the real part grows (almost) linearly in the number of terms, the alternating phases and the small magnitude of $d\theta_k$ imply that additional cancellation occurs when the two regimes are combined. In the worst case one may bound
\[
|S_{j+1}| \le \mathcal{O}(\log d)
\]
by appealing to harmonic series type estimates; this bound is loose but captures the fact that the cancellation improves with larger $d$.

\bigskip
\noindent Step 5: Combining the Results. 
Returning to the bound from Step~2, we have
\[
|A(d)| \le \sum_{j=0}^{d_{\text{model}}/2-1} |h_{j+1} - h_j|\, |S_{j+1}|
\]
and using the bounds from Steps~3 and 4,
\[
|A(d)| \le M \sum_{j=0}^{d_{\text{model}}/2-1} \mathcal{O}(\log d) = \mathcal{O}(M\, d_{\text{model}} \log d).
\]
In practice, the alternating signs in the summands produce even stronger decay in $d$, and empirical observations suggest a polynomial decay of the form
\[
|A(d)| \le \mathcal{O}\Big(\frac{M}{\sqrt{d}}\Big).
\]

\bigskip
\noindent Step 6: Precision Threshold.
For a given minimum resolvable attention score $\epsilon$, we require
\[
\frac{M}{\sqrt{d}} < \epsilon \quad \Longrightarrow \quad d > \left(\frac{M}{\epsilon}\right)^2.
\]
Defining
\[
\tau = \left(\frac{M}{\epsilon}\right)^2,
\]
we conclude that for any $d > \tau$, it holds that $|A(d)| < \epsilon$. That is, tokens beyond the window indexed by $[i_{\text{current}}-\tau,\, i_{\text{current}}+\tau]$ effectively contribute an attention score below the resolvable threshold.

\end{proof}

\textbf{Interpretation:}
\begin{itemize}
\item The $\theta_j$ schedule creates frequency-dependent decay: high frequencies (small $j$) attenuate rapidly
\item Window size $\tau \propto (M/\epsilon)^2$ explains memory limitations in long contexts
\item Practical implementations must balance $d_{\text{model}}$ and precision $s$ for optimal $\tau$
\end{itemize}
\end{theorem}

\begin{theorem}[Inside window recap condition under Rope]
\label{thm: Recap 2}
Assume: The true function is $s_2 = {w^{\circ}}^{\top} e(s_1) + \epsilon$, with $s_2 \perp s_x$ in expectation.
Consider two linear models:
\begin{itemize}
\item $M_s$: $y_s = {w_s}^{\top} e(s_1)$
\item $M_l$: $y_l = {w_1}^{\top} e(s_1) + {w_2}^{\top} e(s_x)$
\end{itemize}
Under mean-squared-error training:
\begin{itemize}
\item $g_s = \frac{\partial L_s}{\partial w_s}$
\item $(g_1, g_2) = \left(\frac{\partial L_l}{\partial w_1}, \frac{\partial L_l}{\partial w_2}\right)$
\end{itemize}
Then in finite-data regimes or with random noise, the gradient component of $g_1$ along $(w^{\circ} - w_1)$ is typically smaller than the corresponding component of $g_s$ along $(w^{\circ} - w_s)$
Formally,
\[
\mathbb{E}\left[ \left\langle \mathbf{g}_1, \mathbf{w}^\circ - \mathbf{w}_1 \right\rangle \right] < \mathbb{E}\left[ \left\langle \mathbf{g}_s, \mathbf{w}^\circ - \mathbf{w}_s \right\rangle \right],
\]
leading to slower convergence for \( \mathcal{M}_{l} \) when \( s_{x} \) is irrelevant.
\end{theorem}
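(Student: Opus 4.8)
The plan is to read this as a finite-sample bias--variance comparison of the gradient dynamics and to reduce the inequality to a Schur-complement monotonicity fact. Write $x := e(s_1)$ and $z := e(s_x)$, let $\Sigma_{xx},\ \Sigma_{xz}=\Sigma_{zx}^{\top},\ \Sigma_{zz}$ be the $N$-sample empirical second-moment matrices of these features, and set the sample noise--feature correlations $n_x := \tfrac1N\sum_i x_i\epsilon_i$ and $n_z := \tfrac1N\sum_i z_i\epsilon_i$. Using $s_2 = {w^{\circ}}^{\top}x+\epsilon$ with $\mathbb{E}[x\epsilon]=0$, the mean-squared-error gradients are
\[
g_s = \Sigma_{xx}(w_s-w^{\circ})-n_x,\qquad g_1 = \Sigma_{xx}(w_1-w^{\circ})+\Sigma_{xz}w_2-n_x,\qquad g_2 = \Sigma_{zx}(w_1-w^{\circ})+\Sigma_{zz}w_2-n_z .
\]
First I would dispose of the population case: the hypothesis ``$s_2\perp s_x$ in expectation'' is read as $\mathbb{E}[xz^{\top}]=0$ and $\mathbb{E}[z\epsilon]=0$, so as $N\to\infty$ we have $\Sigma_{xz}\to 0$, the $w_2$-gradient vanishes at $w_2=0$, the spurious block never leaves the origin, and $g_1\equiv g_s$. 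Hence any separation is a genuinely finite-sample (equivalently, noisy-minibatch) phenomenon, driven by $\Sigma_{xz}\neq 0$ and $n_z\neq 0$, which hold almost surely.

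The core is a matched-start, one-step comparison: initialize $\mathcal{M}_l$ and $\mathcal{M}_s$ at the same informative weights $w_1=w_s=:w$ with $w_2=0$, and run gradient descent with step $\eta$. The two models then agree on the informative block at the next step, but $\mathcal{M}_l$ acquires $w_2^{+}=-\eta\big(\Sigma_{zx}(w-w^{\circ})-n_z\big)\neq 0$. Substituting $w_2^{+}$ into $g_1$ (or, equivalently, passing to the quasi-stationary value $w_2=\Sigma_{zz}^{-1}(n_z-\Sigma_{zx}(w-w^{\circ}))$) gives
\[
g_1 = \big(\Sigma_{xx}-\Sigma_{xz}\Sigma_{zz}^{-1}\Sigma_{zx}\big)(w-w^{\circ})+\Sigma_{xz}\Sigma_{zz}^{-1}n_z-n_x .
\]
Thus $\mathcal{M}_l$'s effective restoring matrix on the informative weights is the Schur complement $\Sigma_{\star}:=\Sigma_{xx}-\Sigma_{xz}\Sigma_{zz}^{-1}\Sigma_{zx}$; since the stacked Gram matrix is positive semidefinite, $0\preceq\Sigma_{\star}\preceq\Sigma_{xx}$, the gap $\Sigma_{xx}-\Sigma_{\star}=\Sigma_{xz}\Sigma_{zz}^{-1}\Sigma_{zx}$ being positive semidefinite and nonzero almost surely.

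Taking the inner product with $w^{\circ}-w$ and the expectation over the noise $\{\epsilon_i\}$ (with the iterate $w$ measurable with respect to earlier randomness, so $\mathbb{E}n_x=\mathbb{E}n_z=0$), the additive terms $n_x$ and $\Sigma_{xz}\Sigma_{zz}^{-1}n_z$ drop out in expectation and one obtains
\[
\mathbb{E}\langle g_1,\,w^{\circ}-w\rangle=-\,\mathbb{E}\big[(w-w^{\circ})^{\top}\Sigma_{\star}(w-w^{\circ})\big],\qquad \mathbb{E}\langle g_s,\,w^{\circ}-w\rangle=-\,\mathbb{E}\big[(w-w^{\circ})^{\top}\Sigma_{xx}(w-w^{\circ})\big].
\]
Their difference is the nonnegative ``explained-away'' variance $\mathbb{E}[(w-w^{\circ})^{\top}\Sigma_{xz}\Sigma_{zz}^{-1}\Sigma_{zx}(w-w^{\circ})]$, strictly positive whenever $\Sigma_{zx}(w-w^{\circ})\neq 0$ (generic for finite $N$). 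Hence the informative component of $\mathcal{M}_l$'s gradient is uniformly damped in magnitude relative to $\mathcal{M}_s$'s, which is the content of the claimed inequality (slower one-step contraction of the informative error for $\mathcal{M}_l$); the leftover noise term $\Sigma_{xz}\Sigma_{zz}^{-1}n_z$ only reinforces the slowdown by inflating $\mathrm{Var}(g_1)$, hence the excess risk after any fixed number of steps.

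Finally, to upgrade this one-step statement to the asserted ``slower convergence'' of the whole trajectory I would run an induction: the per-step comparison keeps $\|w_1^{(t)}-w^{\circ}\|\ge\|w_s^{(t)}-w^{\circ}\|$, which in turn keeps the per-step gap nonnegative. The main obstacle is precisely this step, since $w_1$ and $w_s$ separate after the first update, so the clean matched-iterate comparison must be replaced either by a monotone-coupling argument or by an exact spectral analysis of the joint linear recursion for $(w_1,w_2)$, showing through block-matrix eigenvalue interlacing that every informative mode of the joint system contracts no faster than the corresponding mode of $\Sigma_{xx}$. Secondary technical points—the correlation between $n_x$ and $n_z$ (they share the $\epsilon_i$), and possible ill-conditioning of $\Sigma_{zz}$ when $N$ is close to $\dim z$—are handled by a ridge or pseudo-inverse regularization of $\Sigma_{zz}$ and do not affect the sign of the comparison.
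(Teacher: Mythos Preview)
Your approach is genuinely different from the paper's and considerably more structured. The paper simply expands $g_1$, isolates the extra cross-term $2\,\mathbb{E}[w_2^{\top}e(s_x)e(s_1)^{\top}(w^{\circ}-w_1)]$ coming from the irrelevant block, observes that finite-sample training drives $w_2$ away from zero to fit the noise $\epsilon$ (so the cross-term is nonzero), and then asserts---without analyzing its sign---that this ``reduces the magnitude of the gradient component,'' declaring the inequality. Your route instead substitutes the quasi-stationary (or one-step) value of $w_2$ and identifies the effective restoring matrix on the informative weights as the Schur complement $\Sigma_{\star}=\Sigma_{xx}-\Sigma_{xz}\Sigma_{zz}^{-1}\Sigma_{zx}\preceq\Sigma_{xx}$. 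That converts the paper's qualitative ``extra term hurts'' into a clean matrix inequality and actually supplies the sign control that the paper omits. Your explicit matched-start coupling and the honest discussion of the induction obstacle (the iterates separate after the first step) also go beyond what the paper attempts.

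One point to flag explicitly: under your own computation both inner products are negative (they are minus a positive-semidefinite quadratic form), and $\Sigma_{\star}\preceq\Sigma_{xx}$ gives
\[
\mathbb{E}\langle g_1,\,w^{\circ}-w\rangle \;\ge\; \mathbb{E}\langle g_s,\,w^{\circ}-w\rangle,
\]
i.e., the \emph{reverse} of the displayed sign. This is fully consistent with your verbal conclusion (``damped in magnitude,'' hence slower contraction of the informative error), and the paper's own argument suffers the same ambiguity---it never proves the cross-term has the sign needed for the stated ``$<$''. The intended reading, both in your argument and the paper's, is $|\mathbb{E}\langle g_1,\cdot\rangle|<|\mathbb{E}\langle g_s,\cdot\rangle|$; you should say so plainly rather than leave the displayed inequality to be reconciled by the reader.
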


\begin{proof}
We analyze the gradients of both models and demonstrate how irrelevant features in \( \mathcal{M}_l \) reduce the effective gradient signal.

\noindent Step 1: Express Gradients for Both Models

For model \( \mathcal{M}_s \), the loss is:
\[
L_s = \mathbb{E}\left[(s_2 - \mathbf{w}_s^\top \mathbf{e}(s_1))^2\right]
\]
The gradient becomes:
\begin{equation}
\mathbf{g}_s = -2 \mathbb{E}\left[\mathbf{e}(s_1)\mathbf{e}(s_1)^\top\right](\mathbf{w}^\circ - \mathbf{w}_s)
\label{eq:grad_s}
\end{equation}

For model \( \mathcal{M}_l \), the gradient for \( \mathbf{w}_1 \) is:
\begin{equation}
\mathbf{g}_1 = -2 \mathbb{E}\left[\mathbf{e}(s_1)\mathbf{e}(s_1)^\top\right](\mathbf{w}^\circ - \mathbf{w}_1) + 2 \mathbb{E}\left[\mathbf{w}_2^\top \mathbf{e}(s_x)\mathbf{e}(s_1)\right]
\label{eq:grad_l}
\end{equation}

\noindent Step 2: Compare Gradient Components

The inner product for \( \mathcal{M}_l \) contains two terms:
\begin{equation}
\begin{aligned}
    \mathbb{E}\left[\langle \mathbf{g}_1, \mathbf{w}^\circ - \mathbf{w}_1 \rangle \right] 
    &= \underbrace{-2 (\mathbf{w}^\circ - \mathbf{w}_1)^\top \mathbb{E}\left[\mathbf{e}(s_1)\mathbf{e}(s_1)^\top\right] (\mathbf{w}^\circ - \mathbf{w}_1)}_{\text{Matches } \mathcal{M}_s} \\
    &\quad + \underbrace{2 \mathbb{E}\left[\mathbf{w}_2^\top \mathbf{e}(s_x)\mathbf{e}(s_1)^\top (\mathbf{w}^\circ - \mathbf{w}_1)\right]}_{\text{Additional term}}
\end{aligned}
\end{equation}

\noindent Step 3: Effect of Irrelevant Features

Since \( s_x \) is irrelevant (\( s_2 \perp s_x \)):
\begin{itemize}
\item Population truth: \( \mathbf{w}_2 = \mathbf{0} \)
\item Finite data allows \( \mathbf{w}_2^\epsilon \) to fit noise \( \epsilon \)
\item Induces spurious correlation: \( \mathbb{E}\left[\mathbf{e}(s_x)\mathbf{e}(s_1)^\top\right] \neq \mathbf{0} \)
\end{itemize}

This makes the additional term:
\[
2 \mathbb{E}\left[\mathbf{w}_2^\top \mathbf{e}(s_x)\mathbf{e}(s_1)^\top (\mathbf{w}^\circ - \mathbf{w}_1)\right] \neq 0
\]
which \textit{reduces} the magnitude of the gradient component.

\noindent Step 4: Convergence Comparison

For \( \mathcal{M}_s \):
\[
\mathbb{E}\left[\langle \mathbf{g}_s, \mathbf{w}^\circ - \mathbf{w}_s \rangle \right] = -2 (\mathbf{w}^\circ - \mathbf{w}_s)^\top \mathbb{E}\left[\mathbf{e}(s_1)\mathbf{e}(s_1)^\top\right] (\mathbf{w}^\circ - \mathbf{w}_s)
\]

For \( \mathcal{M}_l \), the additional term in Equation~\ref{eq:grad_l} creates:
\[
\mathbb{E}\left[ \left\langle \mathbf{g}_1, \mathbf{w}^\circ - \mathbf{w}_1 \right\rangle \right] < \mathbb{E}\left[ \left\langle \mathbf{g}_s, \mathbf{w}^\circ - \mathbf{w}_s \right\rangle \right]
\]

\noindent \textbf{Conclusion} \\
The irrelevant features in \( \mathcal{M}_l \) reduce the gradient component in the direction of \( \mathbf{w}^\circ \), leading to slower convergence compared to \( \mathcal{M}_s \).
\end{proof}

\section{CoT simulates the target solution}

\begin{definition}[Chain of Thought]
  \vspace{0em}
    \small\begin{align*}
    \vspace{-0.5em}
        \text{Input:} & \; s_1 \mid \cdots \mid s_N \\[1ex]
        \text{CoT Steps:} & \; \langle\text{sep}\rangle \; s_2 \mid G_2 \mid q_2 \mid L_1 \mid L_2 \\
        & \; \langle\text{sep}\rangle \; \cdots \\
        & \; \langle\text{sep}\rangle \; s_{N} \mid G_N \mid q_N \mid L_{N-1} \mid L_N
    \end{align*}
    \label{def:CoT_appendix}
\end{definition}

\begin{proposition}
\label{thm:CoT_implementation}
For any compound problem satisfying Definition \ref{def:CP}, and for any input length bound $n \in \mathbb{N}$, there exists an autoregressive Transformer with:
\begin{itemize}
\item Constant depth $L$
\item Constant hidden dimension $d$
\item Constant number of attention heads $H$
\end{itemize}
where $L$, $d$, and $H$ are independent of $n$, such that the Transformer correctly generates the Chain-of-Thought solution defined in Definition \ref{def:CoT} for all input sequences of length at most $n$. Furthermore, all parameter values in the Transformer are bounded by $O(\text{poly}(n))$.
\end{proposition}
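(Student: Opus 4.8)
The plan is to construct the Transformer \emph{block by block}, exploiting the fact that autoregressive generation reduces the claim to showing that a \emph{single} next-token prediction — given the input $s_1\mid\cdots\mid s_N$ followed by a (possibly partial) chain of thought — is computable by a fixed-size network, after which an induction on the length of the generated sequence closes the argument. The crucial structural observation is that each CoT step of Definition~\ref{def:CoT} is a fixed sequence of six elementary slots, $\langle\text{sep}\rangle \to s_i \to G_i \to q_i \to L_{i-1}\to L_i$, so the total depth is the number of sub-operations inside one step, which is a constant independent of $n$; only the \emph{number} of autoregressive calls scales with $n$.

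First I would install a \emph{phase-and-counter} mechanism. Using one or two attention heads with uniform (averaging) attention over the prefix together with the positional embeddings, the model writes into a constant number of coordinates: the current step index $i$, which of the six slots the next token occupies, and the separator count. This is the standard counting construction; reading an integer $\le n$ off an averaged positional encoding needs only $O(\text{poly}(n))$-bounded weights. Every later layer is \emph{gated} on this phase information, so one fixed stack of layers realizes different elementary maps depending on the active slot — this is how constant depth accommodates the finite control.

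Next I would implement the four content-producing slots as lookup-plus-arithmetic modules. For $s_i$ and $L_{i-1}$ the target is a \emph{copy} of an earlier token at an index computed from the counter; I realize this with an attention head whose query–key inner products, sharpened by the $O(\text{poly}(n))$ weight scaling, are essentially one-hot on the intended position — exactly the regime in which the recap analyses of \ref{thm: Recap 1} and \ref{thm: Recap 2} guarantee the needed token lies inside the effective window and is not cluttered by irrelevant intermediaries. For $G_i$ I use attention to gather the states $q_k$ with $k \in B(s_1,\ldots,s_{i+1},\,i+1)$ from the prefix into a bounded block of coordinates, relying on $B$ being a fixed, boundedly-computable selection rule from Definition~\ref{def:CP}. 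For $q_i = F(G_i, s_i)$ and for $L_i = H(L_{i-1}, q_i)$, a two-layer MLP of constant width suffices: the aggregation $H \in \{\max,\min,\sum\}$ is either linear (sum) or expressible via $\max(a,b) = a + \mathrm{ReLU}(b-a)$, and $F$ is a fixed arithmetic / finite-vocabulary map on a bounded-size argument whose outputs are integers of magnitude $O(\text{poly}(n))$, hence representable in a single coordinate and manipulable with $O(\text{poly}(n))$-bounded weights. Wrapping these modules in the phase gates and stacking them yields fixed $L$, $d$, $H$.

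The main obstacle — and the step I would treat most carefully — is the \emph{uniformity} of the construction across slots and across $n$: one fixed parameter set must simultaneously realize copying, the $B$-gather for $G_i$, the $F$-update for $q_i$, and the $H$-update for $L_i$, for every step $i \le N \le n$. This forces (a) an encoding that keeps the working registers (counter, last $L$, current $s$, gathered $G$) in disjoint constant-size coordinate blocks that never overwrite one another, and (b) a quantitative check that the softmax sharpening needed for exact copies and gathers is achievable with a single $O(\text{poly}(n))$ weight scale, consistent with the finite-precision hypothesis behind \ref{thm: Recap 1}. A secondary, more routine point is confirming that $F$ and $H$ stay within the constant-width / polynomial-weight budget even as $L_i$ accumulates over $n$ steps; since $H$ is associative and its partial results remain $O(\text{poly}(n))$, this follows but should be stated explicitly. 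Once these uniformity claims are established, induction on the output length — each new token produced correctly by the fixed network given a correct prefix — gives the proposition.
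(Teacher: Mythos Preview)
Your proposal is correct and follows essentially the same constructive approach as the paper: both decompose the per-step computation into a counting/phase-identification stage, an attention-based gather implementing $G$ via $B$, an FFN implementing $F$, and an FFN/attention pair implementing $H$, with $O(\text{poly}(n))$ weight scaling to make the required lookups effectively one-hot. The paper organizes this as four sequential blocks (position/step identification $\to$ dependency graph $\to$ state transition $\to$ aggregation) rather than your phase-gated single stack, but this is a presentational difference, and your explicit attention to the uniformity and precision concerns is, if anything, more careful than the paper's own treatment.
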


\subsection{Constructive Proof}
We prove this theorem by constructing a Transformer architecture with 4 blocks, where each block contains multiple attention heads and feed-forward networks (FFNs). The key insight is that we can simulate each step of the Chain-of-Thought solution using a fixed number of attention heads and a fixed embedding dimension.
The attention mechanism is primarily used to select and retrieve relevant elements from the input and previous computations, while the FFNs approximate the required functions $G$, $B$, etc. By maintaining constant depth, width, and number of heads per layer, we ensure the Transformer's architecture remains independent of the input length, while still being able to generate arbitrarily long Chain-of-Thought solutions.
The parameter complexity of $O(\text{poly}(n))$ arises from the need to handle inputs and intermediate computations of length $n$, but importantly, this only affects the parameter values and not the model architecture itself.

\subsection{Embedding Structure}
For position $k$, define the input embedding:
\begin{equation*}
    x^{(0)}_k=(e^{\text{isInput}}_k, e^{\text{isState}}_k, e^{\text{isDependence}}, e^{\text{isL}}, e^{\text{q}}_k, e^{\text{d}}_{k}, e^{\text{L}}_k, e^{\text{sep}}_k, e^{\text{step}}_k, k,1)
\end{equation*}
where:
\begin{itemize}
    \item $e^{\text{isInput}}_k \in \{0,1\}$: Input token indicator
    \item $e^{\text{isState}}_k \in \{0,1\}$: State position indicator
    \item $e^{\text{isDependence}} \in \{0,1\}$: Dependency marker
    \item $e^{\text{isL}} \in \{0,1\}$: Aggregation result indicator
    \item $e^{\text{q}}_k \in \mathbb{R}^{d_q}$: State value embedding
    \item $e^{\text{d}}_{k} \in \mathbb{R}^{d_d}$: Dependency graph embedding
    \item $e^{\text{L}}_k \in \mathbb{R}^{d_L}$: Aggregation value embedding
    \item $e^{\text{sep}}_k \in \{0,1\}$: Step separator indicator
    \item $e^{\text{step}}_k \in \mathbb{N}$: Current step index
    \item $k \in \mathbb{N}$: Position encoding
    \item $1$: Bias term
\end{itemize}

\subsection{Block Constructions}

Block 1: Input Processing and State Identification
Define attention heads $A^{(1)}_1, A^{(1)}_2, A^{(1)}_3$ with parameters:
\begin{align*}
    Q^{(1)}_1 &= W^q_1[e^{\text{isInput}}_k] \\ 
    K^{(1)}_1 &= W^k_1[e^{\text{isInput}}_j]_{j<k} \\
    V^{(1)}_1 &= W^v_1[j]_{j<k}
\end{align*}

The second head tracks state positions:
\begin{align*}
    Q^{(1)}_2 &= W^q_2[e^{\text{isState}}_k] \\
    K^{(1)}_2 &= W^k_2[e^{\text{isState}}_j]_{j<k} \\
    V^{(1)}_2 &= W^v_2[j]_{j<k}
\end{align*}

The third head tracks step indices through separators:
\begin{align*}
    Q^{(1)}_3 &= W^q_3[e^{\text{sep}}_k] \\
    K^{(1)}_3 &= W^k_3[e^{\text{sep}}_j]_{j<k} \\
    V^{(1)}_3 &= W^v_3[\text{count}(e^{\text{sep}}_j)]_{j<k}
\end{align*}

\begin{lemma}
The first block correctly identifies positions through attention scoring:
\begin{enumerate}
    \item For input positions, $A^{(1)}_1$ scoring gives:
    \begin{equation*}
        \text{score}_1(q_k, k_j) = \begin{cases}
        1 & \text{if } e^{\text{isInput}}_j = 1 \\
        0 & \text{otherwise}
        \end{cases}
    \end{equation*}
    Thus $V^{(1)}_1$ returns positions of input tokens

    \item For state positions, $A^{(1)}_2$ scoring gives:
    \begin{equation*}
        \text{score}_2(q_k, k_j) = \begin{cases}
        1 & \text{if } e^{\text{isState}}_j = 1 \\
        0 & \text{otherwise}
        \end{cases}
    \end{equation*} 
    Thus $V^{(1)}_2$ returns positions of states

    \item For step indices, $A^{(1)}_3$ counts separators up to position k:
    \begin{equation*}
        \text{count}(e^{\text{sep}}_j) = \sum_{l \leq j} e^{\text{sep}}_l
    \end{equation*}
    Thus $V^{(1)}_3$ returns the current step index
\end{enumerate}
\end{lemma}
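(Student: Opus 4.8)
The plan is to exhibit, for each of the three heads, explicit weight matrices $W^q,W^k,W^v$ with entries of magnitude $O(\mathrm{poly}(n))$ that realize the stated selection (resp.\ counting) behavior up to an error that decays exponentially in the weight scale, and then note that a bounded-parameter feed-forward layer can round that output to the exact integer-valued quantity claimed. For head $A^{(1)}_1$ I would choose $W^q_1$ so that the query projects onto the bias coordinate (a constant) and $W^k_1$ so that the key projects onto $\beta\,e^{\text{isInput}}_j$ with a scale $\beta=\Theta(\log n)$ (a polynomial in $n$ is also fine); then the pre-softmax logit assigned to position $j$ by the query at position $k$ is $\beta\,e^{\text{isInput}}_j$, uniformly in $k$. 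Because the attention is causal, the softmax normalizes over the prefix $j<k$, and since every nontrivial prefix of a well-formed CoT string already contains input tokens, the normalizer is dominated by the $e^{\text{isInput}}_j=1$ terms: the weight on each input position is $\tfrac{1}{|\{j<k:\,e^{\text{isInput}}_j=1\}|}+O(e^{-\beta})$ and the weight on each non-input position is $O(e^{-\beta})$, so the total stray mass is $O(n e^{-\beta})$. With $V^{(1)}_1$ emitting the position coordinate $j$, the head output is, up to $O(n e^{-\beta})$ error, an aggregate concentrated on the input block, from which downstream blocks recover the specific boundary positions they need by affine maps; taking $\beta$ polynomial in $n$ makes the error below $1/\mathrm{poly}(n)$, which is safe for any later thresholding. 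Head $A^{(1)}_2$ is identical with $e^{\text{isState}}$ substituted for $e^{\text{isInput}}$.

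For the counting head $A^{(1)}_3$ I would instead make the $QK$ logits identically constant (e.g.\ $W^q_3=W^k_3=0$), so that causal masking yields exactly uniform attention $\tfrac1k$ over the prefix. Setting $V^{(1)}_3$ to the $0/1$ separator indicator $e^{\text{sep}}_j$, the head output is $\tfrac1k\sum_{j<k}e^{\text{sep}}_j=\mathrm{count}(e^{\text{sep}})/k$. Since the position $k$ is carried as an explicit residual-stream coordinate, a two-layer ReLU FFN implementing the product $x\mapsto k\cdot x$ on the relevant bounded range (realizable to any fixed accuracy with $O(\mathrm{poly}(n))$ weights) recovers the integer $\mathrm{count}(e^{\text{sep}})$, which is exactly the current step index denoted $[\mathrm{count}(e^{\text{sep}}_j)]_{j<k}$ in the statement. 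All three heads read from the input embedding and write into disjoint coordinates of the residual stream, so they compose without interference, and no constructed matrix depends on $n$ except through the scalar $\beta$ and the range on which the FFN product must be accurate --- both $O(\mathrm{poly}(n))$, matching the parameter bound asserted in Proposition~\ref{thm:CoT_implementation}.

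The main obstacle I anticipate is not any individual head but making the ``hard-attention'' idealization of the lemma statement rigorous: softmax never outputs exact $0/1$ weights, so one must (i) propagate the exponentially small approximation error through the value aggregation, (ii) check that the Lipschitz constant of the subsequent FFN does not inflate that error past the $1/\mathrm{poly}(n)$ margin needed before any comparison or rounding step, and (iii) confirm that for the counting head the multiply-by-$k$ operation lands strictly inside the correct unit interval around the target integer. A secondary point worth spelling out is that it is the causal mask --- not an explicit positional bias term --- that restricts all three sums to the prefix $j<k$, consistent with the autoregressive setting of Proposition~\ref{thm:CoT_implementation}; this also means no separate ``$j<k$'' gadget needs to be built into the weights.
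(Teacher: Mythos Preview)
Your proposal is correct and in fact more carefully worked out than the paper's own treatment of this lemma: the paper states the lemma immediately after listing the $Q/K/V$ projections for each head and offers no further argument, treating the claimed scoring patterns as self-evident from those definitions. Where the paper writes $Q^{(1)}_1=W^q_1[e^{\text{isInput}}_k]$ and $K^{(1)}_1=W^k_1[e^{\text{isInput}}_j]$ without addressing scales or the soft-to-hard gap, you instead make the query constant via the bias coordinate and introduce an explicit temperature $\beta$ to drive the stray softmax mass below $1/\mathrm{poly}(n)$; this is the standard way to make such constructions rigorous and is a genuine addition over the paper's sketch. Your counting head is also substantively different: the paper's specification $V^{(1)}_3=W^v_3[\mathrm{count}(e^{\text{sep}}_j)]$ effectively presumes the running separator count is already available at each key position (presumably via the $e^{\text{step}}$ slot of the embedding), whereas you compute it from scratch as a uniform-attention average of the $0/1$ separator flags followed by an FFN multiplication by the residual-stream coordinate $k$. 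Your route is more self-contained and sidesteps the apparent circularity in the paper's version; the paper's is terser but leans on the step index already being populated. Both fit within the $O(\mathrm{poly}(n))$ parameter bound of Proposition~\ref{thm:CoT_implementation}, and your explicit discussion of the rounding margin for the multiply-by-$k$ step is exactly the kind of detail the paper omits.
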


Block 2: Dependency Graph Construction
Define three attention heads $A^{(2)}_1, A^{(2)}_2, A^{(2)}_3$ implementing dependency selection:
\begin{align*}
    A^{(2)}_1&: Q^{(2)}_1 = W^q_2[e^{\text{step}}_k] \\
    &K^{(2)}_1 = W^k_2[e^{\text{input}}_j]_{j<k} \\
    &V^{(2)}_1 = W^v_2[j]_{j<k} \\
    A^{(2)}_2&: Q^{(2)}_2 = W^q_3[e^{\text{step}}_k] \\
    &K^{(2)}_2 = W^k_3[e^{\text{step}}_j]_{j<k} \\
    &V^{(2)}_2 = W^v_3[B(s_1,\ldots,s_{i+1}, i+1)]_{j<k} \\
    A^{(2)}_3&: Q^{(2)}_3 = W^q_4[e^{\text{step}}_k] \\
    &K^{(2)}_3 = W^k_4[j]_{j<k} \\
    &V^{(2)}_3 = W^v_4[e^{\text{q}}_j]_{j<k}
\end{align*}

\begin{lemma}
Block 2 correctly implements $G_{i+1} = \{q_k | k \in B(s_1,\ldots,s_{i+1}, i+1)\}$ through:

1. First attention head $A^{(2)}_1$ gathers input sequence up to current step i+1:
\begin{equation*}
    z^{(2)}_1 = \{s_j | j \leq i+1\}
\end{equation*}

2. Second attention head $A^{(2)}_2$ computes indices from B using gathered inputs:
\begin{equation*}
    z^{(2)}_2 = B(z^{(2)}_1, i+1)
\end{equation*}

3. Third attention head $A^{(2)}_3$ selects states using computed indices:
\begin{equation*}
    z^{(2)}_3 = \{e^{\text{q}}_j | j \in z^{(2)}_2\}
\end{equation*}

Therefore, the composition $z^{(2)}_3(z^{(2)}_2(z^{(2)}_1))$ correctly implements $G_{i+1}$ by:
\begin{enumerate}
    \item Gathering relevant input sequence 
    \item Computing dependency indices using B
    \item Selecting corresponding states
\end{enumerate}

The correctness follows from attention scoring:
\begin{align*}
    \text{score}_1(q_k, k_j) &= \begin{cases}
        1 & \text{if } j \leq i+1 \\
        0 & \text{otherwise}
    \end{cases} \\
    \text{score}_2(q_k, k_j) &= \begin{cases}
        1 & \text{if } j \in B(s_1,\ldots,s_{i+1}, i+1) \\
        0 & \text{otherwise}
    \end{cases} \\
    \text{score}_3(q_k, k_j) &= \begin{cases}
        1 & \text{if } j \in z^{(2)}_2 \\
        0 & \text{otherwise}
    \end{cases}
\end{align*}
\end{lemma}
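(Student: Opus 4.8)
The plan is to verify the lemma by exhibiting, for each of the three heads $A^{(2)}_1, A^{(2)}_2, A^{(2)}_3$, explicit weight matrices $W^q, W^k, W^v$ that realize the three hard‑attention scoring patterns displayed in the statement, and then to show that chaining their outputs through the residual stream (with one interleaved FFN that evaluates $B$) produces exactly $G_{i+1}$ as defined in Definition~\ref{def:CP}. I would rely on the output of Block~1 (its lemma): every position $k$ already carries the coordinates $e^{\text{isInput}}_k$, $e^{\text{isState}}_k$, $e^{\text{q}}_k$, the raw position index $k$, and the step index $e^{\text{step}}_k=\mathrm{count}(e^{\text{sep}}_{\le k})$, so in particular the query position for the current CoT step knows it is step $i{+}1$. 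I would also use the standard fact that a softmax head whose logits equal $+c$ on a set described by a Boolean combination of embedding coordinates and $0$ elsewhere tends, as $c\to\infty$, to uniform ("bag") attention over that set, and that $c=\Theta(\mathrm{poly}(n)\log n)$ already makes the leaked mass $o(1)$, consistent with the $O(\mathrm{poly}(n))$ parameter‑value bound.

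For $A^{(2)}_1$ I would choose $W^q_2,W^k_2$ so that the logit between the current position $k$ and a past position $j$ is $c\bigl(e^{\text{isInput}}_j-\lambda\,\mathrm{ReLU}(j-e^{\text{step}}_k)\bigr)$: the first term selects input tokens, the second (using the linear coordinates $j$ and $e^{\text{step}}_k$ with $\lambda=\Theta(\mathrm{poly}(n))$) suppresses any input token whose index exceeds $i{+}1$. This yields $\mathrm{score}_1(q_k,k_j)=\mathbf{1}[j\le i{+}1]$ on input positions, so the head averages $V^{(2)}_1=W^v_2[j]$ over $\{j:e^{\text{isInput}}_j=1,\ j\le i{+}1\}$ and deposits a representation $z^{(2)}_1$ of the prefix $(s_1,\dots,s_{i+1})$; by the compactness guaranteed by the recap conditions of Section~\ref{sec:recap-theory}, the information about this prefix that must actually be retained — enough to evaluate $B$ — has constant dimension with $O(\mathrm{poly}(n))$‑sized entries. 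An FFN in Block~2 then applies the fixed function $B$ to $z^{(2)}_1$ and $e^{\text{step}}_k=i{+}1$, producing an encoding of the index set $z^{(2)}_2=B(s_1,\dots,s_{i+1},i{+}1)$; head $A^{(2)}_2$, with query and key on the matching step coordinate, is what anchors this value at the current position, so that $V^{(2)}_2$ encodes $B(s_1,\dots,s_{i+1},i{+}1)$ and $\mathrm{score}_2(q_k,k_j)=\mathbf{1}[j\in B(s_1,\dots,s_{i+1},i{+}1)]$. Finally, for $A^{(2)}_3$ I would set $W^q_4,W^k_4$ so that the logit between $k$ and $j$ is $c\cdot\mathbf{1}[j\in z^{(2)}_2]$ — a hard match of the position coordinate $j$ against the constant‑size, in‑window index list carried at $k$ — giving $\mathrm{score}_3(q_k,k_j)=\mathbf{1}[j\in z^{(2)}_2]$, so the head reads off $V^{(2)}_3=\{e^{\text{q}}_j:j\in z^{(2)}_2\}=z^{(2)}_3$. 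Composing the three heads, the residual state at the current position after Block~2 contains $\{q_k:k\in B(s_1,\dots,s_{i+1},i{+}1)\}$, i.e.\ $G_{i+1}$; this establishes the three enumerated retrieval claims and the "therefore" conclusion.

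I expect the \emph{main obstacle} to be reconciling the construction with the constant‑width requirement of Proposition~\ref{thm:CoT}: both the prefix digest $z^{(2)}_1$ and the index list $z^{(2)}_2$ a priori scale with $n$. The argument therefore genuinely needs the recap conditions of Section~\ref{sec:recap-theory} to ensure that $B$ returns an $O(1)$‑size set of indices lying within a bounded (at most $\tau$‑wide) window, after which both objects fit into a fixed number of coordinates whose numeric values remain $O(\mathrm{poly}(n))$; I would state this dependency explicitly as a hypothesis inherited from Definition~\ref{def:CoT}'s recap‑structured format. A secondary point needing care is checking that the coordinates fed to $W^q/W^k$ — $e^{\text{isInput}}$, $e^{\text{step}}$, and the raw position $j$ — are all exposed by Block~1's output embedding, and that the $\mathrm{ReLU}$‑threshold "$j\le i{+}1$" mask can be folded into the key map (via a clamped position feature) without an additional FFN, which it can; the rest is the routine verification that the three logit formulas indeed saturate to the stated indicator scores under polynomially bounded $c$.
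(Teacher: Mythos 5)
Your construction follows essentially the same route as the paper: the paper's ``proof'' of this lemma is little more than the displayed head definitions plus the asserted indicator attention scores, with the surrounding constructive-proof preamble delegating the evaluation of $B$ (and the other task functions) to FFNs, exactly as you do, so your saturated-softmax/ReLU realization of the masks is simply a more explicit version of what the paper leaves implicit. The obstacle you flag --- that the prefix digest $z^{(2)}_1$ and the index set $z^{(2)}_2$ are a priori of size $\Theta(n)$ and must nonetheless fit into constant width with $O(\mathrm{poly}(n))$ parameter values --- is genuine, and the paper does not address it either (nor does it invoke the recap conditions at this point), so your explicit statement of that dependency is a refinement of the paper's argument rather than a divergence from it.
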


Block 3: State Transition
Define attention mechanism implementing $F$:
\begin{align*}
    A^{(3)}_1&: Q^{(3)}_1 = W^q_3[e^{\text{isState}}_k] \\
    &K^{(3)}_1 = W^k_3[e^{\text{isDependence}}_j]_{j<k} \\
    &V^{(3)}_1 = W^v_3[e^{\text{q}}_j]_{j<k} \\
    A^{(3)}_2&: Q^{(3)}_2 = W^q_4[e^{\text{isState}}_k] \\
    &K^{(3)}_2 = W^k_4[e^{\text{isInput}}_j]_{j<k} \\
    &V^{(3)}_2 = W^v_4[e^{\text{input}}_j]_{j<k}
\end{align*}

\begin{lemma}
The state transition function $F$ is correctly computed through:
\begin{equation*}
    q_{i+1} = F(G_{i+1}, s_{i+1}) = \text{FFN}(z^{(3)}_1, z^{(3)}_2)
\end{equation*}
where $z^{(3)}_1 = A^{(3)}_1(e^{\text{q}}_j \mid j \in B(s_1,\ldots,s_{i+1}, i+1))$ represents the states selected by $G_{i+1}$ from Block 2, and $z^{(3)}_2 = A^{(3)}_2(s_{i+1})$ represents the current input token.
\end{lemma}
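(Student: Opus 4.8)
The plan is to complete the constructive argument by verifying that the two attention heads $A^{(3)}_1$ and $A^{(3)}_2$, followed by a feed-forward block, exactly reproduce one application of the state-transition function $F$ from Definition~\ref{def:CP}. First I would establish the \emph{selection correctness} of $A^{(3)}_1$: the query is keyed on the current state-position marker $e^{\text{isState}}_k$ and the keys on the dependency markers $e^{\text{isDependence}}_j$ written into the sequence by Block~2, so by the same hard-attention scoring scheme used in the earlier lemmas (scores in $\{0,1\}$, then a normalization or $\mathrm{argmax}$ read-out) the head returns precisely the multiset $\{e^{\text{q}}_j : j\in B(s_1,\ldots,s_{i+1},i+1)\}$, i.e.\ the set $G_{i+1}$. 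I would note here that the Recap Conditions of Section~\ref{sec:recap-theory} (Theorems~\ref{thm: Recap 1} and \ref{thm: Recap 2}) are what guarantee these dependency tokens have been copied to positions \emph{inside} the current attention window and in causal order, so the scores are resolvable at the stated $O(\mathrm{poly}(n))$ precision; this is the place where the constant-depth/constant-width budget actually gets spent. Second I would verify $A^{(3)}_2$ analogously: it is keyed to retrieve the single input token $s_{i+1}$ at the current step (using $e^{\text{isInput}}_j$ and the step index produced by Block~1), giving $z^{(3)}_2 = s_{i+1}$.

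Third, I would argue the \emph{function-approximation step}: the concatenation $(z^{(3)}_1, z^{(3)}_2)$ lives in a bounded region of $\mathbb{R}^{d_d + d_q}$ (bounded because all state/input values are drawn from a finite alphabet of size $O(\mathrm{poly}(n))$), and $F : \mathcal{P}(Q^i)\times Q^i \to q$ restricted to this finite domain is a finite lookup table; by the standard universal-approximation / interpolation fact for two-layer ReLU FFNs (a finite set of input–output pairs can be fit exactly with width and weights polynomial in the number of points and their bit-length), there is an FFN of \emph{constant} depth and width whose parameters are $O(\mathrm{poly}(n))$ that computes $q_{i+1} = F(G_{i+1}, s_{i+1})$ on all reachable inputs, including the edge case $F(\emptyset)=\text{Constant}$. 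The lemma's equation $q_{i+1} = \mathrm{FFN}(z^{(3)}_1, z^{(3)}_2)$ then follows by composing the three facts.

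To assemble the full Proposition~\ref{thm:CoT} from this and the preceding block lemmas, I would proceed by induction on the CoT step index: assuming the first $i$ steps $\langle\text{sep}\rangle\, s_{t}\mid G_t\mid q_t\mid L_{t-1}\mid L_t$ for $t\le i$ have been generated correctly and stored in the residual stream with the embedding layout of Section ``Embedding Structure,'' Blocks~1--3 produce $G_{i+1}$ and $q_{i+1}$, and a fourth block (the aggregation block, handling $L_i = H(L_{i-1}, q_i)$ with $H$ again a finite table, hence another constant-size FFN) produces $L_i, L_{i+1}$; writing the separator token and advancing the step counter closes the induction. Depth, width, and head count are the fixed totals across the four blocks, independent of $n$, and every weight matrix has entries bounded by the largest lookup-table value or position index, i.e.\ $O(\mathrm{poly}(n))$.

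\textbf{Main obstacle.} The delicate point is not any single block but the interface between the precision bound and the attention read-out: hard attention with exactly-$0/1$ scores is an idealization, and with finite precision $s$ the softmax over a window of size up to $O(n)$ only approximately selects the intended token. The honest version of the argument must show the approximation error stays below the resolution threshold $\epsilon$ of Theorem~\ref{thm: Recap 1} \emph{uniformly over all $n\le$ the bound}, which forces the $O(\mathrm{poly}(n))$ scaling of the logit temperatures (equivalently, of certain weight entries) — and one must check this polynomial blow-up does not, in turn, push any other quantity past the precision budget. I expect this bookkeeping to be the part that needs the most care; the block-by-block functional decomposition itself is routine given the embedding design.
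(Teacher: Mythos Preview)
Your proposal is correct and follows the same constructive line as the paper: verify that each attention head selects the intended tokens via the $\{0,1\}$ indicator scheme, then appeal to the FFN to realize $F$ on its finite domain. In fact you are considerably more careful than the paper, which does not give a separate proof of this lemma at all---it simply states the lemma as a description of what the Block~3 construction does, with the attention-head definitions ($Q^{(3)}_1, K^{(3)}_1, V^{(3)}_1$ and $Q^{(3)}_2, K^{(3)}_2, V^{(3)}_2$) serving as the entire argument; there is no explicit universal-approximation step for the FFN, no invocation of the recap theorems to justify window containment, and no discussion of the softmax-precision bookkeeping you flag as the main obstacle. Your extra rigor (finite-lookup-table interpolation for $F$, the inductive wrap-up for Proposition~\ref{thm:CoT}, and the precision caveat) is all sound and fills gaps the paper leaves implicit.
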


Block 4: Result Aggregation
Define two attention heads $A^{(4)}_1, A^{(4)}_2$ for implementing $H$:
\begin{align*}
    A^{(4)}_1&: Q^{(4)}_1 = W^q_4[e^{\text{isL}}_k] \\
    &K^{(4)}_1 = W^k_4[e^{\text{isL}}_j]_{j<k} \\
    &V^{(4)}_1 = W^v_4[e^{\text{L}}_j]_{j<k} \\
    A^{(4)}_2&: Q^{(4)}_2 = W^q_5[e^{\text{isL}}_k] \\
    &K^{(4)}_2 = W^k_5[e^{\text{isState}}_j]_{j<k} \\
    &V^{(4)}_2 = W^v_5[e^{\text{q}}_j]_{j<k}
\end{align*}

\begin{lemma}
Block 4 correctly implements the aggregation function $H$ through:

1. For $i=1$ (base case):
\begin{equation*}
    \text{score}_1(q_k, k_j) = 0, \quad \text{score}_2(q_k, k_j) = \begin{cases}
        1 & \text{if } e^{\text{isState}}_j = 1 \\
        0 & \text{otherwise}
    \end{cases}
\end{equation*}
Therefore $L_1 = H(\emptyset, q_1) = q_1$ since only $A^{(4)}_2$ activates to select $q_1$

2. For $i>1$:
\begin{equation*}
    \text{score}_1(q_k, k_j) = \begin{cases}
        1 & \text{if } e^{\text{isL}}_j = 1 \text{ and j is the latest L position} \\
        0 & \text{otherwise}
    \end{cases}
\end{equation*}
\begin{equation*}
    \text{score}_2(q_k, k_j) = \begin{cases}
        1 & \text{if } e^{\text{isState}}_j = 1 \text{ and j corresponds to } q_i \\
        0 & \text{otherwise}
    \end{cases}
\end{equation*}

Therefore:
\begin{align*}
    z^{(4)}_1 &= A^{(4)}_1(e^{\text{L}}_k) = L_{i-1} \text{ (previous aggregation result)} \\
    z^{(4)}_2 &= A^{(4)}_2(e^{\text{q}}_k) = q_i \text{ (current state)} \\
    L_i &= \text{FFN}(z^{(4)}_1, z^{(4)}_2) = H(L_{i-1}, q_i)
\end{align*}

The FFN is constructed to implement the specific aggregation operation of $H$ (e.g., max, min, or sum).
\end{lemma}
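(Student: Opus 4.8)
The plan is to prove the Block~4 lemma by induction on the step index $i$, taking the correctness of Blocks~1--3 (i.e., the preceding block lemmas) as the induction hypothesis: just before Block~4 of step $i$, the residual stream at the token position designated for $L_i$ carries the correct markers ($e^{\text{isL}}=1$ and step counter $e^{\text{step}}_k = i$), while earlier positions already hold the correctly computed values $e^{\text{q}}_j = q_j$ for $j \le i$ and $e^{\text{L}}_j = L_j$ for $j \le i-1$. Under this hypothesis it suffices to show (i) that the two attention heads of Block~4 retrieve exactly $L_{i-1}$ and $q_i$ into $z^{(4)}_1$ and $z^{(4)}_2$, and (ii) that a constant-width, constant-depth FFN maps $(z^{(4)}_1, z^{(4)}_2) = (L_{i-1}, q_i)$ to $H(L_{i-1}, q_i) = L_i$, with the base case $i=1$ (the $L_1$ token) treated separately. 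Because the recursion $L_i = H(L_{i-1}, q_i)$ of Definition~\ref{def:CP} is exactly what (i)+(ii) produce, the induction then closes.

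Next I would verify the two retrievals against the embedding structure and the $Q/K/V$ weights fixed in the block construction. By the CoT layout of Definition~\ref{def:CoT}, within step $i$ the token $L_{i-1}$ is emitted immediately before the $L_i$-slot and $q_i$ is the unique state token carrying step index $i$; these two facts drive the argument. For head $A^{(4)}_2$ I would make the query--key form an exact indicator of ``$e^{\text{isState}}_j = 1$ and $e^{\text{step}}_j = i$,'' which holds at exactly one past position, so the softmax concentrates there and the head outputs $e^{\text{q}}_j = q_i$; for the base case this selects $q_1$. For head $A^{(4)}_1$, the bare indicator ``$e^{\text{isL}}_j = 1$'' is matched by all of $L_1, \dots, L_{i-1}$, so I would add a position-linear term $\lambda\cdot j$ with $\lambda = \Theta(\mathrm{poly}(n))$ (equivalently, match ``most recent $L$ token,'' which is adjacent) so that the softmax singles out the largest such index, $L_{i-1}$. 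When $i=1$ there is no prior $L$ token; the head then attends only to a fixed sentinel/BOS position and returns a designated ``empty'' value.

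For the FFN I would argue that every admissible aggregator is realizable at fixed width and depth on the bounded value range: summation is linear; $\max$ and $\min$ are $\max(a,b) = a + \mathrm{ReLU}(b - a)$ and its dual, each needing $O(1)$ ReLU units; and more generally any coordinatewise piecewise-linear $H$ admits a fixed-size exact representation. In the inductive step the FFN thus returns $H(L_{i-1}, q_i) = L_i$; in the base case the ``empty'' first argument and the convention $H(\emptyset, q) = q$ give $L_1 = q_1$. Finally, all embedding coordinates, intermediate values $q_j, L_j$, and the bias scale $\lambda$ lie in $O(\mathrm{poly}(n))$ while the depth, hidden dimension, and head count are fixed constants, yielding the stated parameter bound $O(\mathrm{poly}(n))$.

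I expect the main obstacle to be making the ``hard'' $0/1$ attention patterns asserted in the lemma into rigorous statements about softmax attention under finite precision: I must guarantee that the score gap between the intended position and every competitor is large enough (polynomial in $n$) that the leftover attention mass elsewhere drops below the resolvable threshold $\epsilon$ of Theorem~\ref{thm: Recap 1}, while keeping all weights polynomially bounded; and I must check that the step-index restriction used by $A^{(4)}_2$ together with the latest-$L$ selection by $A^{(4)}_1$ are both exactly expressible as bilinear forms on the given embedding, which pins down how the step counter produced by Block~1 has to be encoded.
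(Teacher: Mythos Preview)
Your proposal is correct and is in fact considerably more careful than what the paper does. In the paper there is no separate proof environment for this lemma at all: the lemma is written in a self-justifying style, where the asserted $0/1$ scoring patterns and the line $L_i = \mathrm{FFN}(z^{(4)}_1, z^{(4)}_2) = H(L_{i-1}, q_i)$ are simply stated as the construction, and the reader is meant to accept them at face value given the $Q/K/V$ definitions that precede the lemma. The paper does not explain how ``$j$ is the latest $L$ position'' is realized as a bilinear score, does not discuss the softmax-versus-hardmax gap, and does not spell out the ReLU realization of $\max/\min$; it just asserts that the FFN ``is constructed to implement'' $H$.

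What you add---the explicit induction on $i$ with Blocks~1--3 as hypothesis, the position-linear tie-breaker $\lambda \cdot j$ to single out the most recent $L$ token, the sentinel handling for $i=1$, the $\max(a,b) = a + \mathrm{ReLU}(b-a)$ construction, and the score-gap argument to make the hard $0/1$ patterns rigorous under softmax and finite precision---fills exactly the holes the paper leaves. Your approach buys an honest proof; the paper's approach buys brevity by treating the lemma as a definition-cum-claim. Both arrive at the same construction, but yours actually verifies it.
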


\begin{proposition}[Block Transitions]
The blocks connect sequentially where:
\begin{enumerate}
    \item Block 1 output provides input positions, state positions and step indices
    \item Block 2 implements dependency function $G$ to gather required states
    \item Block 3 uses gathered dependencies and current input to compute new states via $F$
    \item Block 4 implements $H$ to aggregate states into final result
\end{enumerate}
Each transition preserves information through residual connections.
\end{proposition}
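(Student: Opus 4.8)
The plan is to treat the four per-block lemmas as black boxes and argue only about \emph{composition}: that the output interface of each block is exactly the input interface the next block reads, and that the residual stream never overwrites a field a later block needs. I would first fix a residual-stream layout invariant. At every position $k$, the embedding of Definition~\ref{def:CoT} reserves disjoint coordinate ranges for the static type flags $e^{\text{isInput}}_k, e^{\text{isState}}_k, e^{\text{isDependence}}_k, e^{\text{isL}}_k, e^{\text{sep}}_k$, the position scalar $k$, and the value slots $e^{\text{q}}_k, e^{\text{d}}_k, e^{\text{L}}_k, e^{\text{step}}_k$. The invariant asserts: after Block 1 the slot $e^{\text{step}}_k$ holds the decoded step index and the input-/state-position lists are available; after Block 2 the $e^{\text{d}}$-range at the current position holds $G_{i+1}=\{e^{\text{q}}_j : j\in B(s_1,\dots,s_{i+1},i+1)\}$; after Block 3 the $e^{\text{q}}$-slot of the current state position holds $q_{i+1}=F(G_{i+1},s_{i+1})$; after Block 4 the $e^{\text{L}}$-slot holds $L_i=H(L_{i-1},q_i)$. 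Since each block's heads and FFN (by the explicit choice of $W^q,W^k,W^v$ in the block constructions) write only into the coordinate range named for that block while reading from strictly earlier ranges, the residual update $x^{(b)}_k = x^{(b-1)}_k + \mathrm{Block}_b(x^{(b-1)})_k$ leaves every previously populated coordinate untouched. That is exactly the clause "each transition preserves information through residual connections."

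\textbf{Interface matching, in order.}
Next I would check the three hand-offs. Block 1 $\to$ Block 2: the Block 2 heads query on $e^{\text{step}}_k$ and key on $e^{\text{isInput}}_j$, $e^{\text{step}}_j$ (see the $Q^{(2)}_i,K^{(2)}_i$ definitions); all three are produced or passed through by Block 1, so $A^{(2)}_1$ gathers $\{s_j : j\le i+1\}$, $A^{(2)}_2$ evaluates $B$ on them, and $A^{(2)}_3$ selects the indexed states — precisely the hypotheses under which the Block 2 lemma yields $G_{i+1}$. Block 2 $\to$ Block 3: $A^{(3)}_1$ keys on $e^{\text{isDependence}}_j$ and reads $e^{\text{q}}_j$, i.e.\ exactly the slots Block 2 just populated, while $A^{(3)}_2$ keys on $e^{\text{isInput}}_j$ to fetch $s_{i+1}$; routing both into the FFN realizes $F(G_{i+1},s_{i+1})=q_{i+1}$ as in Definition~\ref{def:CP}. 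Block 3 $\to$ Block 4: $A^{(4)}_2$ keys on $e^{\text{isState}}_j$ at the current state position to read the $q_i$ that Block 3 wrote, and $A^{(4)}_1$ keys on $e^{\text{isL}}_j$ at the most recent $L$-position to read $L_{i-1}$; the FFN then outputs $H(L_{i-1},q_i)=L_i$, with the base case $L_1=H(\emptyset,q_1)=q_1$ handled by the degenerate scoring already established in the Block 4 lemma. Each of these is just the observation that the (key, value) channels one block exposes are literally the (query-target, read) channels the next block consumes.

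\textbf{Induction and the delicate point.}
Finally I would close an induction over CoT steps $i=1,\dots,N$. The base case is the input segment $s_1\mid\cdots\mid s_N$ under the Definition~\ref{def:CoT} embedding, which satisfies the layout invariant at $b=0$. For the step, assume the invariant holds when the model is about to emit step $i$; one forward pass through Blocks 1--4 deposits $(s_i, G_i, q_i, L_{i-1}, L_i)$ in the designated coordinates at the current position, and the decoding head serializes them in the order prescribed by Definition~\ref{def:CoT}; appending and re-embedding those tokens restores the invariant for step $i+1$. The one genuinely delicate point — and where I expect the most care — is the coordinate-disjointness bookkeeping together with the "latest-position" tie-breaks inside $A^{(1)}_3$ (counting separators) and $A^{(4)}_1$ (selecting the most recent $L$): one must verify these argmax-over-position selections remain correct after the residual stream has been augmented by the outputs of three earlier blocks, i.e.\ that no newly written coordinate perturbs the relevant attention logits. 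This reduces to checking that each block's write subspace is orthogonal to every later block's $W^q,W^k$ image, which holds by the explicit block-diagonal structure of the projection matrices fixed in the construction.
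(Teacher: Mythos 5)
Your proposal is correct and follows essentially the same route as the paper: the paper establishes the four per-block lemmas and then records this proposition as their sequential composition, with residual connections carrying earlier fields forward, which is exactly your interface-matching argument. Your extra bookkeeping (the disjoint-coordinate layout invariant, the induction over CoT steps, and the check that later blocks' $W^q,W^k$ projections read only previously written, untouched coordinate ranges) is a more explicit rendering of what the paper leaves implicit rather than a different approach.
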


\section{Distribution Shift Analysis for Q-A vs.\ Q-CoT}
\label{app:shift_ana}
Understanding the distribution shift between traditional Q-A and CoT approaches is crucial for analyzing OOD generalization. This section examines how CoT's intermediate reasoning steps influence the alignment between training and evaluation distributions.
We analyze a dynamic state-transition system where problems of different lengths are processed through either direct Q-A or through CoT reasoning \ref{fig:disbution_shift}. Specifically, we compare:
1. \textbf{Q-A training/evaluation} on problem lengths \(\{n_1, n_2\}\) (train) and \(n_3\) (eval).
2. \textbf{Q-CoT training/evaluation} on problem lengths \(\{n_1, n_2\}\) (train) and \(n_3\) (eval).
We assume \(n_{1} < n_{3} < n_{2}\), and let
\[
   \mathcal{D}_{\text{train}}^{\text{Q-A}} 
   \;=\;
   \Bigl\{
   (X^{n_1}, Y^{n_1}),\, 
   (X^{n_2}, Y^{n_2})
   \Bigr\}, 
   \quad \]
\[
   \mathcal{D}_{\text{eval}}^{\text{Q-A}}
   \;=\;
   \Bigl\{(X^{n_3}, Y^{n_3})\Bigr\},
\]
be the respective datasets in the Q-A setup.  Analogously, in the Q-CoT setup, each problem instance is expanded into subproblem–subsolution pairs (the chain-of-thought states).  Denote:
\[
   \mathcal{D}_{\text{train}}^{\text{Q-CoT}} 
   \;=\;
   \Bigl\{
   \bigl(X^{n_1}, \{q_i^{(n_1)}\}, Y^{n_1}\bigr),\, 
   \bigl(X^{n_2}, \{q_i^{(n_2)}\}, Y^{n_2}\bigr)
   \Bigr\},
   \quad \]
\[
   \mathcal{D}_{\text{eval}}^{\text{Q-CoT}}
   \;=\;
   \Bigl\{\bigl(X^{n_3}, \{q_i^{(n_3)}\}, Y^{n_3}\bigr)\Bigr\}.
\]
Here \(\{q_i^{(n)}\}\) denotes the chain-of-thought states or subsolutions for a length-\(n\) problem. 

We let \(P_{\text{train}}^{\text{Q-A}}\), \(P_{\text{eval}}^{\text{Q-A}}\), \(P_{\text{train}}^{\text{Q-CoT}}\), and \(P_{\text{eval}}^{\text{Q-CoT}}\) be the corresponding data or model-induced distributions over Q-A setting or Q-CoT setting.

Our analysis begins with a fundamental result about the structure of CoT sequences in our dynamic system.


\subsection{Prefix-Substructure Theorem in a Dynamic State-Transition System}

A key property of chain-of-thought sequences is that prefix relationships between input sequences carry over to their states, enabling shorter problems CP($n_3$) to embed within longer ones CP($n_2$).

\begin{lemma}[Prefix Substructure]
\label{thm:prefix-substructure}
Let \(S^{short} = (s_1, s_2, \ldots, s_{n_2})\) and let \(S^{long} = (s_1, s_2, \ldots, s_{n_3})\) be its prefix, with \(n_3 < n_2\).  Suppose their chain-of-thought sequences under the same \((G,F)\) are
\[
   Q^{short} \;=\;
   \bigl(q_1^{short}, q_2^{short}, \ldots, q_{n_2}^{short}\bigr)
   \quad\text{and}
\]
\[\quad
   Q^{long} \;=\;
   \bigl(q_1^{long}, q_2^{long}, \ldots, q_{n_3}^{long}\bigr).
\]
Then for all \(1 \le i \le n_3\), we have
\[
   q_i^{long} 
   \;=\; 
   q_i^{short}.
\]
Hence \(\bigl(q_1^{long},\ldots,q_{n_3}^{long}\bigr)\) is exactly the prefix of \(\bigl(q_1^{short},\ldots,q_{n_2}^{short}\bigr)\).
\end{lemma}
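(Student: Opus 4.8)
The plan is a short induction on the step index $i$, resting on one structural observation about Definition~\ref{def:CP}: every ingredient of the state recursion — the dependency selector $B$, the induced dependency graph $G_{i+1}=\{q_k\mid k\in B(s_1,\ldots,s_{i+1},i+1)\}$, and the transition map $F$ — is \emph{prefix-local}. It consumes only the input tokens $s_1,\ldots,s_{i+1}$ already seen and the states $q_1,\ldots,q_i$ already produced, and in particular it is evaluated as the same fixed function regardless of how long the whole input is. Since $S^{long}$ is by hypothesis the length-$n_3$ prefix of $S^{short}$ and both sequences are run under the \emph{same} $(G,F)$, the recursion is fed identical data for its first $n_3$ steps, so it must output identical states there; that is exactly the claim $q_i^{long}=q_i^{short}$ for $1\le i\le n_3$.

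Concretely I would argue as follows. For the base case $i=1$: the first state is fixed by the base of the recursion — either $q_1=F(\emptyset)=\mathrm{Constant}$ (when the initial dependency set is empty) or $q_1$ is a function of $s_1$ alone — and in either case $q_1^{long}=q_1^{short}$ because $s_1$ is common to both sequences. For the inductive step, assume $q_j^{long}=q_j^{short}$ for all $j\le i$, where $i+1\le n_3$. Because $i+1\le n_3<n_2$, the prefix $s_1,\ldots,s_{i+1}$ agrees for the two sequences, so $B(s_1,\ldots,s_{i+1},i+1)$ returns one and the same index set $K$ in both cases; combined with the inductive hypothesis $q_k^{long}=q_k^{short}$ for every $k\in K$, the dependency graphs $G_{i+1}$ coincide. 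Feeding these equal arguments, together with the shared token $s_{i+1}$, into $F$ gives $q_{i+1}^{long}=q_{i+1}^{short}$. This closes the induction, so $\bigl(q_1^{long},\ldots,q_{n_3}^{long}\bigr)$ is precisely the length-$n_3$ prefix of $\bigl(q_1^{short},\ldots,q_{n_2}^{short}\bigr)$.

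The only point that needs care — and it is bookkeeping rather than a genuine difficulty — is justifying the two facts the induction leans on: that $B$ returns indices in $\{1,\ldots,i\}$, and that $B$, $G$, and $F$ cannot depend on the global sequence length. Both follow directly from the signatures fixed in Definition~\ref{def:CP}: $F$ is typed on $\mathcal{P}(Q^i)\times Q^i$, so its selected predecessors necessarily come from the causal prefix $Q^i=(q_1,\ldots,q_i)$; and $B\colon S^{i+1}\times\mathbb{N}\to\mathcal{P}(\mathbb{N})$ takes only the prefix and the current position as arguments, so the ``global semantic constraints'' are hard-wired into the fixed function $B$ and are not a length-dependent input. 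I would state this explicitly as the standing structural hypothesis before the induction. As an aside that comes for free, the same prefix-locality argument applied to the aggregation recursion $L_i=H(L_{i-1},q_i)$ shows $L_i^{long}=L_i^{short}$ for $i\le n_3$ as well; this is not part of the lemma statement but is the form in which the result will be used in the subsequent distribution-shift analysis.
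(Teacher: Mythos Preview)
Your induction is correct and is exactly the natural argument; the paper itself does not spell out a proof of this lemma, treating it as essentially immediate from Definition~\ref{def:CP} and then invoking it in the proof of Theorem~\ref{thm:kl-reduction}. Your write-up is therefore more detailed than what appears in the paper, and the prefix-locality observation you isolate (that $B$, $G$, and $F$ depend only on $s_1,\ldots,s_{i+1}$ and $q_1,\ldots,q_i$, never on the global length) is precisely the content the paper relies on implicitly.
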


This property suggests CoT's intermediate steps provide natural bridges between problems of different lengths, potentially easing distribution shift concerns.
\subsection{Training Size Effects on Distribution Shift Through Prefix Coverage}
Building upon the prefix-substructure property, we now quantify the distribution shift between training and evaluation sets for both Q-A and Q-CoT approaches. This analysis reveals how CoT's intermediate reasoning steps can potentially mitigate distribution shift effects.
\begin{theorem}[KL Divergence Reduction via Chain-of-Thought Coverage]
\label{thm:kl-reduction}
Let $m_1$ and $m_2$ denote the number of training sequences of length $n_1$ and $n_2$, respectively, with total training size $M = m_1 + m_2$. Suppose the evaluation set consists of $m_3$ sequences of length $n_3$, with $n_1 < n_3 < n_2$. Let $k$ be the input vocabulary size.

\textbf{(Coverage Probability Definition)}\\
Let the \emph{coverage probability} $P_{\text{cover}}(M)$ be defined as the probability that an intermediate state $q_{1:n_3}^{(\mathrm{eval})}$, corresponding to a prefix of length $n_3$ from an evaluation sequence, is contained within the support of the training distribution; that is, $q_{1:n_3}^{(\mathrm{eval})}$ exists as the prefix of some chain-of-thought sequence $q_{1:n_2}^{(\mathrm{train})}$ in the training data. Formally,
\[
P_{\text{cover}}(M) = \mathbb{P}\left(
q_{1:n_3}^{(\mathrm{eval})}
\in 
\left\{
q_{1:n_3}^{(\mathrm{train})}\,:\,
q_{1:n_3}^{(\mathrm{train})} \text{ is prefix of length } n_3\text{ from some training } q_{1:n_2}^{(\mathrm{train})}
\right\}
\right).
\]

Then, the following holds:
\begin{enumerate}
    \item \textbf{Prefix Coverage:} The coverage probability for evaluation prefixes of length $n_3$ is
    \[
    P_{\text{cover}}(M) = \frac{m_2}{m_3 k^{n_3}}\ ,
    \]
    where $m_3$ is the size of the evaluation set and $m_2 \leq m_3 k^{n_3}$. 

    \item \textbf{KL Divergence Reduction:} The Kullback-Leibler (KL) divergence between evaluation and training distributions over intermediate reasoning steps (Q-CoT supervision) satisfies
    \[
    D_{\mathrm{KL}}\!\Bigl(
        P_{\mathrm{eval}}^{\mathrm{Q\text{-}CoT}}
        \,\|\,
        P_{\mathrm{train}}^{\mathrm{Q\text{-}CoT}}
    \Bigr)
    \leq
    \left( 1 - P_{\text{cover}}(M) \right)
    D_{\mathrm{KL}}\!\Bigl(
        P_{\mathrm{eval}}^{\mathrm{Q\text{-}A}}
        \,\|\,
        P_{\mathrm{train}}^{\mathrm{Q\text{-}A}}
    \Bigr)\ .
    \]
    Thus, supervision on intermediate steps—i.e., Chain-of-Thought (CoT)—dramatically reduces the distributional gap between training and evaluation compared to direct QA supervision.

    \item \textbf{Complete Coverage Yields No Shift:} In the limit where $m_2 = m_3 k^{n_3}$ (all evaluation prefixes are seen in training),
    \[
    D_{\mathrm{KL}}\!\Bigl(
        P_{\mathrm{eval}}^{\mathrm{Q\text{-}CoT}}
        \,\|\,
        P_{\mathrm{train}}^{\mathrm{Q\text{-}CoT}}
    \Bigr) = 0\ .
    \]
\end{enumerate}
\end{theorem}

\begin{remark}[Interpretation and Practical Role of CoT]
This result explicitly quantifies how CoT mitigates the KL divergence (distribution shift) between training and evaluation. The coverage probability $P_{\text{cover}}(M)$ measures how likely it is for a generated intermediate (reasoning-step) prefix in evaluation to appear in the CoT-augmented training data. As $P_{\text{cover}}(M)$ increases with more or better-constructed CoT training data, the effective distribution shift shrinks, ensuring robust generalization—even where direct QA fails. In practice, incomplete or lower-granularity CoT chains reduce $P_{\text{cover}}(M)$, revealing a concrete trade-off between annotation effort and OOD generalization.
\end{remark}

\subsection{Proof for Theorem \ref{thm:kl-reduction}}

\begin{proof}
We prove the KL divergence bound by decomposing the distributions over covered and uncovered prefixes. Let $P_{\text{train}}^{\text{Q-CoT}}$ and $P_{\text{eval}}^{\text{Q-CoT}}$ denote the training and evaluation distributions under Q-CoT, respectively. 

\vspace{0.5em}

\noindent Step 1: Event Space Partitioning

Define two disjoint events for any evaluation sample $x = (X^{n_3}, \{q_i^{(n_3)}\}, Y^{n_3})$:
\begin{itemize}
    \item $\mathcal{E}_{\text{cover}}$: The prefix $\{q_i^{(n_3)}\}_{i=1}^{n_3}$ exists in some length-$n_2$ training sample.
    \item $\mathcal{E}_{\text{uncover}}$: The prefix $\{q_i^{(n_3)}\}_{i=1}^{n_3}$ is absent from all training samples.
\end{itemize}
By Lemma \ref{thm:prefix-substructure}, $\mathcal{E}_{\text{cover}}$ occurs when the evaluation prefix matches at least one length-$n_2$ training sequence's prefix. The probabilities satisfy:
\[
P_{\text{cover}} = \mathbb{P}(\mathcal{E}_{\text{cover}}), \quad 1 - P_{\text{cover}} = \mathbb{P}(\mathcal{E}_{\text{uncover}}).
\]
where $P_{\text{cover}}$ is calculated as:
\[
P_{\text{cover}} = \frac{m_2}{m_3 k^{n_3}}
\]
\textit{Derivation}: Each length-$n_2$ training sample contains a unique prefix of length $n_3$ (Lemma \ref{thm:prefix-substructure}). With $m_2$ samples, we can cover $m_2$ distinct prefixes. The total number of possible prefixes is $m_3 k^{n_3}$ ($m_3$ evaluation problems, each with $k^{n_3}$ possible prefixes). Thus, the coverage probability follows the ratio.

\vspace{0.5em}

\noindent Step 2: Distributional Decomposition

Using the law of total probability, we express:
\[
P_{\text{eval}}^{\text{Q-CoT}} = P_{\text{cover}} \cdot P_{\text{eval}|\mathcal{E}_{\text{cover}}} + (1-P_{\text{cover}}) \cdot P_{\text{eval}|\mathcal{E}_{\text{uncover}}}
\]
\[
P_{\text{train}}^{\text{Q-CoT}} = P_{\text{cover}} \cdot P_{\text{train}|\mathcal{E}_{\text{cover}}} + (1-P_{\text{cover}}) \cdot P_{\text{train}|\mathcal{E}_{\text{uncover}}}
\]
where:
\begin{itemize}
    \item $P_{\text{eval}|\mathcal{E}_{\text{cover}}}$: Evaluation distribution restricted to covered prefixes
    \item $P_{\text{train}|\mathcal{E}_{\text{cover}}}$: Training distribution restricted to covered prefixes
    \item $P_{\text{eval}|\mathcal{E}_{\text{uncover}}}$: Evaluation distribution for uncovered prefixes
    \item $P_{\text{train}|\mathcal{E}_{\text{uncover}}}$: Training distribution for uncovered prefixes
\end{itemize}

\vspace{0.5em}

\noindent Step 3: KL Divergence Expansion with Total Expectation

From the KL divergence definition:
\[
D_{\mathrm{KL}}\left(P_{\text{eval}}^{\text{Q-CoT}} \,\big\|\, P_{\text{train}}^{\text{Q-CoT}}\right) = \mathbb{E}_{x \sim P_{\text{eval}}^{\text{Q-CoT}}} \left[ \log \frac{P_{\text{eval}}^{\text{Q-CoT}}(x)}{P_{\text{train}}^{\text{Q-CoT}}(x)} \right]
\]
Apply the law of total expectation by conditioning on $\mathcal{E}_{\text{cover}}$ and $\mathcal{E}_{\text{uncover}}$:
\[
= \mathbb{P}(\mathcal{E}_{\text{cover}}) \cdot \mathbb{E}_{x|\mathcal{E}_{\text{cover}}} \left[ \log \frac{P_{\text{eval}}^{\text{Q-CoT}}(x|\mathcal{E}_{\text{cover}})}{P_{\text{train}}^{\text{Q-CoT}}(x|\mathcal{E}_{\text{cover}})} \right] + \mathbb{P}(\mathcal{E}_{\text{uncover}}) \cdot \mathbb{E}_{x|\mathcal{E}_{\text{uncover}}} \left[ \log \frac{P_{\text{eval}}^{\text{Q-CoT}}(x|\mathcal{E}_{\text{uncover}})}{P_{\text{train}}^{\text{Q-CoT}}(x|\mathcal{E}_{\text{uncover}})} \right]
\]

\vspace{0.5em}

\noindent Step 4: Handling Covered Cases

Under $\mathcal{E}_{\text{cover}}$, Lemma \ref{thm:prefix-substructure} guarantees that the CoT states $\{q_i^{(n_3)}\}$ in evaluation samples exactly match those in training samples. This implies:
\[
P_{\text{eval}|\mathcal{E}_{\text{cover}}}(x) = P_{\text{train}|\mathcal{E}_{\text{cover}}}(x), \quad \forall x \in \mathcal{E}_{\text{cover}}
\]
Therefore:
\[
\mathbb{E}_{x|\mathcal{E}_{\text{cover}}} \left[ \log \frac{P_{\text{eval}|\mathcal{E}_{\text{cover}}}}{P_{\text{train}|\mathcal{E}_{\text{cover}}}} \right] = \mathbb{E}_{x|\mathcal{E}_{\text{cover}}} [\log 1] = 0
\]

\vspace{0.5em}

\noindent Step 5: Uncovered Cases Reduce to Q-A

For $x = (X^{n_3}, \{q_i^{(n_3)}\}, Y^{n_3}) \in \mathcal{E}_{\text{uncover}}$, the absence of matching prefixes in training data implies the model cannot leverage CoT states $\{q_i^{(n_3)}\}$ during inference. We formally analyze this degradation:



Under Q-CoT, the generation process factors as:
\[
P^{\text{Q-CoT}}(Y|X) = \sum_{\{q_i\}} P(Y|X, \{q_i\}) P(\{q_i\}|X)
\]
where:
\begin{itemize}
    \item $P(\{q_i\}|X)$: Probability of generating CoT states $\{q_i\}$ given input $X$
    \item $P(Y|X, \{q_i\})$: Probability of answer $Y$ given $X$ and CoT states
\end{itemize}



When $\{q_i^{(n_3)}\}$ is uncovered ($\mathcal{E}_{\text{uncover}}$), the model lacks training data to estimate either:
\begin{itemize}
    \item The CoT state distribution $P(\{q_i\}|X)$
    \item The answer likelihood $P(Y|X, \{q_i\})$ 
\end{itemize}

Thus, the model \textit{cannot} utilize the CoT decomposition and must marginalize over all possible $\{q_i\}$:
\[
P^{\text{Q-CoT}}(Y|X) = \mathbb{E}_{\{q_i\} \sim P(\{q_i\}|X)} \left[ P(Y|X, \{q_i\}) \right]
\]



Without CoT supervision on $\{q_i^{(n_3)}\}$, two condition assumes:
\begin{enumerate}
    \item \textbf{Untrained CoT States}: If $\{q_i^{(n_3)}\}$ never appears in training, $P(\{q_i\}|X)$ becomes a \textit{uniform prior} over possible CoT sequences (by maximum entropy principle).
    
    \item \textbf{Uninformative Likelihood}: The answer likelihood $P(Y|X, \{q_i\})$ reduces to $P^{\text{Q-A}}(Y|X)$ because the model cannot associate $\{q_i\}$ with $Y$ without training signals.
\end{enumerate}

Thus:
\[
P^{\text{Q-CoT}}(Y|X) = \sum_{\{q_i\}} \underbrace{P^{\text{Q-A}}(Y|X)}_{\text{Uninformative}} \cdot \underbrace{\frac{1}{k^{n_{3}}}}_{\text{Uniform } P(\{q_i\}|X)} = P^{\text{Q-A}}(Y|X)
\]

with expansion of KL divergence of Q-A
\[
D_{\mathrm{KL}}\left(P_{\text{eval}}^{\text{Q-A}} \,\big\|\, P_{\text{train}}^{\text{Q-A}}\right) = \mathbb{E}_{x \sim P_{\text{eval}}^{\text{Q-A}}} \left[ \log \frac{P_{\text{eval}}^{\text{Q-A}}(x)}{P_{\text{train}}^{\text{Q-A}}(x)} \right]
\] 
\[
= \mathbb{P}(\mathcal{E}_{\text{cover}}) \cdot \mathbb{E}_{x|\mathcal{E}_{\text{cover}}} \left[ \log \frac{P_{\text{eval}}^{\text{Q-A}}(x|\mathcal{E}_{\text{cover}})}{P_{\text{train}}^{\text{Q-A}}(x|\mathcal{E}_{\text{cover}})} \right] + \mathbb{P}(\mathcal{E}_{\text{uncover}}) \cdot \mathbb{E}_{x|\mathcal{E}_{\text{uncover}}} \left[ \log \frac{P_{\text{eval}}^{\text{Q-A}}(x|\mathcal{E}_{\text{uncover}})}{P_{\text{train}}^{\text{Q-A}}(x|\mathcal{E}_{\text{uncover}})} \right]
\]
Notice that
\[
\mathbb{E}_{x|\mathcal{E}_{\text{cover}}} \left[ \log \frac{P_{\text{eval}}^{\text{Q-A}}(x|\mathcal{E}_{\text{cover}})}{P_{\text{train}}^{\text{Q-A}}(x|\mathcal{E}_{\text{cover}})} \right]
\leq
\mathbb{E}_{x|\mathcal{E}_{\text{uncover}}} \left[ \log \frac{P_{\text{eval}}^{\text{Q-A}}(x|\mathcal{E}_{\text{uncover}})}{P_{\text{train}}^{\text{Q-A}}(x|\mathcal{E}_{\text{uncover}})} \right]
\]
since covered prefix will decrease the KL divergence via probability decomposition
\[
D_{\mathrm{KL}}\left(P_{\text{eval}}^{\text{Q-A}} \,\big\|\, P_{\text{train}}^{\text{Q-A}}\right) \geq 
\mathbb{E}_{x|\mathcal{E}_{\text{uncover}}} \left[ \log \frac{P_{\text{eval}}^{\text{Q-A}}(x|\mathcal{E}_{\text{uncover}})}{P_{\text{train}}^{\text{Q-A}}(x|\mathcal{E}_{\text{uncover}})} \right] = D_{\mathrm{KL}}\left(P_{\text{eval}|\mathcal{E}_{\text{uncover}}}^{\text{Q-A}} \,\big\|\, P_{\text{train}|\mathcal{E}_{\text{uncover}}}^{\text{Q-A}}\right)
\]

Therefore, for $x \in \mathcal{E}_{\text{uncover}}$:
\[
D_{\mathrm{KL}}\left(P_{\text{eval}|\mathcal{E}_{\text{uncover}}} \,\big\|\, P_{\text{train}|\mathcal{E}_{\text{uncover}}}\right) = 
D_{\mathrm{KL}}\left(P_{\text{eval}|\mathcal{E}_{\text{uncover}}}^{\text{Q-A}} \,\big\|\, P_{\text{train}|\mathcal{E}_{\text{uncover}}}^{\text{Q-A}}\right) \leq 
D_{\mathrm{KL}}\left(P_{\text{eval}}^{\text{Q-A}} \,\big\|\, P_{\text{train}}^{\text{Q-A}}\right) = \mathrm{KL}_{\text{base}}
\]

\vspace{0.5em}

\noindent Step 6: Final Inequality

Combining all terms:
\[
D_{\mathrm{KL}}\left(P_{\text{eval}}^{\text{Q-CoT}} \,\big\|\, P_{\text{train}}^{\text{Q-CoT}}\right) = \underbrace{P_{\text{cover}} \cdot 0}_{\text{Covered term}} + \underbrace{(1-P_{\text{cover}}) \cdot \mathrm{KL}_{\text{base}}}_{\text{Uncovered term}}
\]
Hence:
\[
D_{\mathrm{KL}}\left(P_{\text{eval}}^{\text{Q-CoT}} \,\big\|\, P_{\text{train}}^{\text{Q-CoT}}\right) \leq (1 - P_{\text{cover}}) \cdot \mathrm{KL}_{\text{base}}
\]
The equality holds when $P_{\text{cover}} \in [0,1]$. When $m_2 = m_3 k^{n_3}$, we have $P_{\text{cover}} = 1$, making the KL divergence zero.
\end{proof}

\section{Quantitation Analysis Drop of CoT}
    \begin{theorem}[CoT Accuracy Degradation]
    Let $s_{\text{input}}$ be the input text, $s_{\text{ans}}$ be the unique correct answer, and $s_1, \dots, s_k$ be the \textit{exact required sequence} of perfect Chain-of-Thought (CoT) tokens where:
    \begin{enumerate}
        \item \textbf{Completeness}: $P(s_{\text{ans}} \mid s_1, \dots, s_k, s_{\text{input}}) = 1$
        \item \textbf{Uniqueness}: No other token sequence produces $s_{\text{ans}}$
        \item \textbf{Conditional Independence}: $P(s_1, \dots, s_k \mid s_{\text{input}}) = \prod_{i=1}^k P(s_i \mid s_{\text{input}})$
        \item \textbf{Training Deficiency}: For any CoT token $s_j$ excluded during training, $P(s_j \mid s_{\text{input}})$ drops from 1 to $1 - \epsilon$
    \end{enumerate}
    When $l < k$ CoT tokens are lost/mishandled during inference, the final answer accuracy satisfies:
    \[
    P(s_{\text{ans}} \mid s_{\text{input}}) = (1 - \epsilon)^l
    \]
    \label{thm:drop_CoT}
\end{theorem}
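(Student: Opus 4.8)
The plan is to compute $P(s_{\text{ans}} \mid s_{\text{input}})$ by marginalizing over all candidate intermediate CoT token sequences and then invoking the four structural hypotheses in turn. First I would write, by the law of total probability over candidate reasoning chains $\sigma = (\sigma_1,\dots,\sigma_k)$,
\[
P(s_{\text{ans}} \mid s_{\text{input}}) = \sum_{\sigma} P(s_{\text{ans}} \mid \sigma, s_{\text{input}})\, P(\sigma \mid s_{\text{input}}).
\]
By the \emph{Uniqueness} hypothesis the factor $P(s_{\text{ans}} \mid \sigma, s_{\text{input}})$ vanishes for every $\sigma \neq (s_1,\dots,s_k)$, so the sum collapses to the single term $P(s_{\text{ans}} \mid s_1,\dots,s_k, s_{\text{input}})\, P(s_1,\dots,s_k \mid s_{\text{input}})$; the \emph{Completeness} hypothesis then sets the first factor to $1$, leaving $P(s_{\text{ans}} \mid s_{\text{input}}) = P(s_1,\dots,s_k \mid s_{\text{input}})$.

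Next I would expand $P(s_1,\dots,s_k \mid s_{\text{input}})$ using the \emph{Conditional Independence} hypothesis as $\prod_{i=1}^{k} P(s_i \mid s_{\text{input}})$, and partition $\{1,\dots,k\}$ into the $l$-element set $J$ of indices of tokens lost or mishandled at inference and the remaining $k-l$ indices. For $i \notin J$ the token is reproduced exactly, so $P(s_i \mid s_{\text{input}}) = 1$; for $j \in J$ the \emph{Training Deficiency} hypothesis gives $P(s_j \mid s_{\text{input}}) = 1-\epsilon$. Multiplying, $P(s_{\text{ans}} \mid s_{\text{input}}) = 1^{\,k-l}\,(1-\epsilon)^l = (1-\epsilon)^l$, which is the assertion of Theorem \ref{thm:drop_CoT}.

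The main obstacle is conceptual rather than computational: justifying the identification of "a CoT token lost/mishandled during inference" with "a CoT token whose marginal probability has dropped to $1-\epsilon$", i.e.\ aligning the inference-time failure mode with the train-time deficiency clause. I would argue that omitting step $s_j$ from the supervised chain removes precisely the signal that pins $P(s_j \mid s_{\text{input}})$ to $1$, so the residual mass $\epsilon$ is exactly the miscalibration inherited by the model. The autoregressive subtlety — that the genuine generative factorization is $\prod_i P(s_i \mid s_{<i}, s_{\text{input}})$ rather than $\prod_i P(s_i \mid s_{\text{input}})$ — is what the \emph{Conditional Independence} hypothesis is postulated to bypass, and I would flag it as the assumption carrying the structural weight of the proof; it is defensible here because, under the compound-task construction, each state $q_i$ is a deterministic function of the input restricted to its dependency set, so conditioning on earlier generated tokens contributes no information beyond $s_{\text{input}}$. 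A secondary point worth making explicit is that the errors compound multiplicatively rather than additively: a single wrong token destroys \emph{Uniqueness} of the full chain and hence, by the collapse in the first paragraph, annihilates that chain's contribution entirely, so the only surviving probability mass is the all-correct event, whose probability is the product $(1-\epsilon)^l$.
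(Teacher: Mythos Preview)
Your proposal is correct and follows essentially the same approach as the paper's proof: both reduce $P(s_{\text{ans}}\mid s_{\text{input}})$ to the product $\prod_i P(s_i\mid s_{\text{input}})$ via the Uniqueness/Completeness/Conditional-Independence hypotheses, partition the indices into the $l$ lost tokens and the $k-l$ preserved ones, and apply Training Deficiency to obtain $(1-\epsilon)^l\cdot 1^{k-l}$. Your write-up is in fact more explicit than the paper's (you spell out the total-probability marginalization and flag the autoregressive caveat that the independence assumption is absorbing), but the logical skeleton is identical.
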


\begin{proof}
By the uniqueness condition, only the full sequence $s_1, \dots, s_k$ guarantees $s_{\text{ans}}$. Let $\mathcal{L}$ be the set of $l$ compromised tokens. The probability of maintaining correctness is:

\[
P(s_{\text{ans}} \mid s_{\text{input}}) = \underbrace{\prod_{j \in \mathcal{L}} P(s_j \mid s_{\text{input}})}_{\text{Lost tokens}} \cdot \underbrace{\prod_{i \notin \mathcal{L}} P(s_i \mid s_{\text{input}})}_{\text{Preserved tokens}}
\]

For preserved tokens ($i \notin \mathcal{L}$), full training ensures $P(s_i \mid s_{\text{input}}) = 1$. For lost tokens ($j \in \mathcal{L}$), training deficiency gives $P(s_j \mid s_{\text{input}}) = 1 - \epsilon$. Thus:

\[
P(s_{\text{ans}} \mid s_{\text{input}}) = (1 - \epsilon)^l \cdot 1^{k-l} = (1 - \epsilon)^l
\]

This equality holds because any deviation from the exact CoT sequence (due to lost tokens) eliminates the chance of correctness by the uniqueness condition.
\end{proof}
\subsection{Experiments}
For the Longest Increasing Subsequence and Multi-Step Path Counting tasks, we implemented a 6-layer transformer architecture trained from scratch, featuring an embedding size of 256/512 and 16 attention heads. The model training utilized the following hyperparameters: maximum sequence length (\texttt{--maxlen}) 524, maximum data samples (\texttt{--maxdata}) 524, vocabulary size (\texttt{--vocab}) 59, number range (\texttt{--num\_range}) 50, weight decay 0.05, learning rate $1 \times 10^{-3}$, dropout 0.1, batch size 256, 1 training epoch, warmup ratio 0.1, model dimension (\texttt{--dmodel}) 256, number of heads 16, number of layers 6, with chain-of-thought supervision (\texttt{--chain}), rotary position embedding (\texttt{--rpe}), and supervised fine-tuning (\texttt{--sft}). Model training was distributed on 4 GPUs using \texttt{torchrun --nproc\_per\_node=4}.

For the Equation Restoration tasks, we adopted a different approach using the Phi-3.5-mini-instruct model as the backbone for task-specific fine-tuning. The restoring equation experiments used the following additional hyperparameters: maximum equation length (\texttt{max\_len}) 300, maximum normalization factor (\texttt{max\_norm}) 1, maximum training samples (\texttt{max\_samples}) 200{,}000, and micro train batch size (\texttt{micro\_train\_batch\_size}) 32.
\subsubsection{LIS}
\label{app:LIS}
Chain of thought is like the following: 
\[
\begin{aligned}
    &48 \quad 49 \quad 26 \quad 47 <sep> \\
    &48 | <empty> = 48 \quad 1 : 1 \rightarrow 1 <sep>\\
    &49 | 48 \quad 1 = 49 \quad 2 : 1 \rightarrow 2 <sep> \\
    &26 | <empty> = 26 \quad 1 : 2 \rightarrow 2 <sep>\\
    &47 | 26 \quad 1 = 47 \quad 2 : 2 \rightarrow 2
    \end{aligned}
\]

\subsubsection{MPC}
\label{app:MPC}
Chain of thought is like following:
\[
\begin{aligned}
 &0 \quad 1 \quad 1 \quad 0 \quad 0 \quad 1 \quad 1 0 , 8 <sep> \\
 &1 , 0 , 1 \rightarrow 0 <sep> \\
 &2 , 1 , 1 \quad 0 \rightarrow 1 <sep> \\
 &3 , 1 , 1 \quad 0 \quad 1 \rightarrow 2 <sep> \\
 &4 , 0 , 0 \quad 1 \quad 2 \rightarrow 0 <sep> \\
 &5 , 0 , 1 \quad 2 \quad 0 \rightarrow 0 <sep> \\
 &6 , 1 , 2 \quad 0 \quad 0 \rightarrow 2 <sep> \\
 &7 , 1 , 0 \quad 0 \quad 2 \rightarrow 2 <sep> \\
 &8 , 0 , 0 \quad 2 \quad 2 \rightarrow 0
\end{aligned}
\]
\subsubsection{Equation Restoration and Variable Computation}
\label{app:ERVC}
Input is:
Data:
$data_1: Condor = 6, Cheetah = 1.$ \\
$data_2: Condor = 12, Cheetah = 3.$ \\
Question:
Assume all relations between variables are linear combinations. If the number of Cheetah equals 5, then what is the number of Condor?

Question:
Assume all relations between variables are linear combinations. If the number of Leopard equals 5, the number of Rhino equals 3, the number of Koala equals 6, then what is the number of Black\_Bear?

\textbf{Solution\:}

\textbf{Defining Variables} \\
\textit{Known Variables:} \\
Cheetah as \( c_1 = 5 \) \\

\textit{Unknown Variables:} \\
Target Variable: Condor as \( c_2 \) \\

\textbf{Restoring Relations} \\
\textit{List all variable names in each data point:} \([c_2, c_1], [c_2, c_1]\) \\
\textit{Deduplicate them:} \([c_2, c_1]\) \\
There is 1 distinct group, implying 1 distinct linear relationship to be determined. \\
\textit{Examining each relationship:} \\

\textbf{Relation 1:} \\
Exploring relation for \( c_2 \): \\
There are 2 variables in the data beginning with \( c_2 \): Hence, 2 coefficients are required, and at least 2 data points are needed. \\

Let the coefficients on the right side of the equation be \( K_1 \) and \( K_2 \). \\
\textit{Recap variables:} \(['c_2', 'c_1']\) \\
\textit{Define the equation of relation 1:} \\
\( c_2 = K_1 \cdot c_1 + K_2 \) \\

Using data points \( \text{data}_1 \) and \( \text{data}_2 \): \\
\( \text{data}_1: c_2 = 6, c_1 = 1 \) \\
Equation 1: \( 6 = K_1 \cdot 1 + K_2 \) \\
\( \text{data}_2: c_2 = 12, c_1 = 3 \) \\
Equation 2: \( 12 = K_1 \cdot 3 + K_2 \) \\

\textbf{Solve the system of equations using Gaussian Elimination:} \\
\textit{Initialize:} \\
Equation 1: \( 1 \cdot K_1 + 1 \cdot K_2 = 6 \) \\
Equation 2: \( 3 \cdot K_1 + 1 \cdot K_2 = 12 \) \\

Swap Equation 1 with Equation 2: \\
Equation 1: \( 3 \cdot K_1 + 1 \cdot K_2 = 12 \) \\
Equation 2: \( 1 \cdot K_1 + 1 \cdot K_2 = 6 \) \\

Multiply Equation 1 by 1 and subtract 3 times Equation 2: \\
\((\text{Equation 1}) \cdot 1: 3 \cdot K_1 + 1 \cdot K_2 = 12 \) \\
\((\text{Equation 2}) \cdot 3: 3 \cdot K_1 + 3 \cdot K_2 = 18 \) \\
New Equation 2: \( -2 \cdot K_2 = -6 \) \\

\textit{Recap updated equations:} \\
Equation 1: \( 3 \cdot K_1 + 1 \cdot K_2 = 12 \) \\
Equation 2: \( -2 \cdot K_2 = -6 \) \\

\textbf{Solve for \( K_2 \):} \\
\( -2 \cdot K_2 = -6 \) \\
\( K_2 = \frac{-6}{-2} = 3 \) \\

\textbf{Solve for \( K_1 \):} \\
\( 3 \cdot K_1 = 12 - 1 \cdot K_2 \) \\
\( 3 \cdot K_1 = 12 - 3 = 9 \) \\
\( K_1 = \frac{9}{3} = 3 \) \\

\textit{Recap the equation:} \\
\( c_2 = K_1 \cdot c_1 + K_2 \) \\
Estimated coefficients: \( K_1 = 3, K_2 = 3 \) \\
Final equation: \( c_2 = 3 \cdot c_1 + 3 \) \\

\textbf{Calculation with Restored Relations:} \\
Using the equation \( c_2 = 3 \cdot c_1 + 3 \): \\
\textit{Known variables:} \( c_1 = 5 \) \\
\( c_2 = 3 \cdot 5 + 3 = 15 + 3 = 18 \) \\

\textbf{Recap Target Variable:} \\
Condor (\( c_2 \)) = 18 \\

\textbf{Conclusion:} The number of Condor equals 18.

\subsection{Out-of-distribution Comparison Across Input length}
The comparison \ref{fig:ood_detail} reveals the critical role of Chain-of-Thought prompting in improving models' OOD generalization. Both MPC (a) and LIS (b) demonstrate substantially higher accuracy when equipped with 100\% COT (blue lines) compared to without COT. This performance gap is particularly pronounced in out-of-domain regions, where models without COT show severe degradation (dropping below 0.2 accuracy). The consistent superior performance of COT-enabled models, especially in maintaining accuracy above 0.8 across different sequence lengths, underscores how COT prompting serves as a crucial mechanism for enhancing models' ability to generalize beyond their training distribution.
\begin{figure}[]
\centering
\subfigure[]{
    \includegraphics[width=0.85\linewidth]{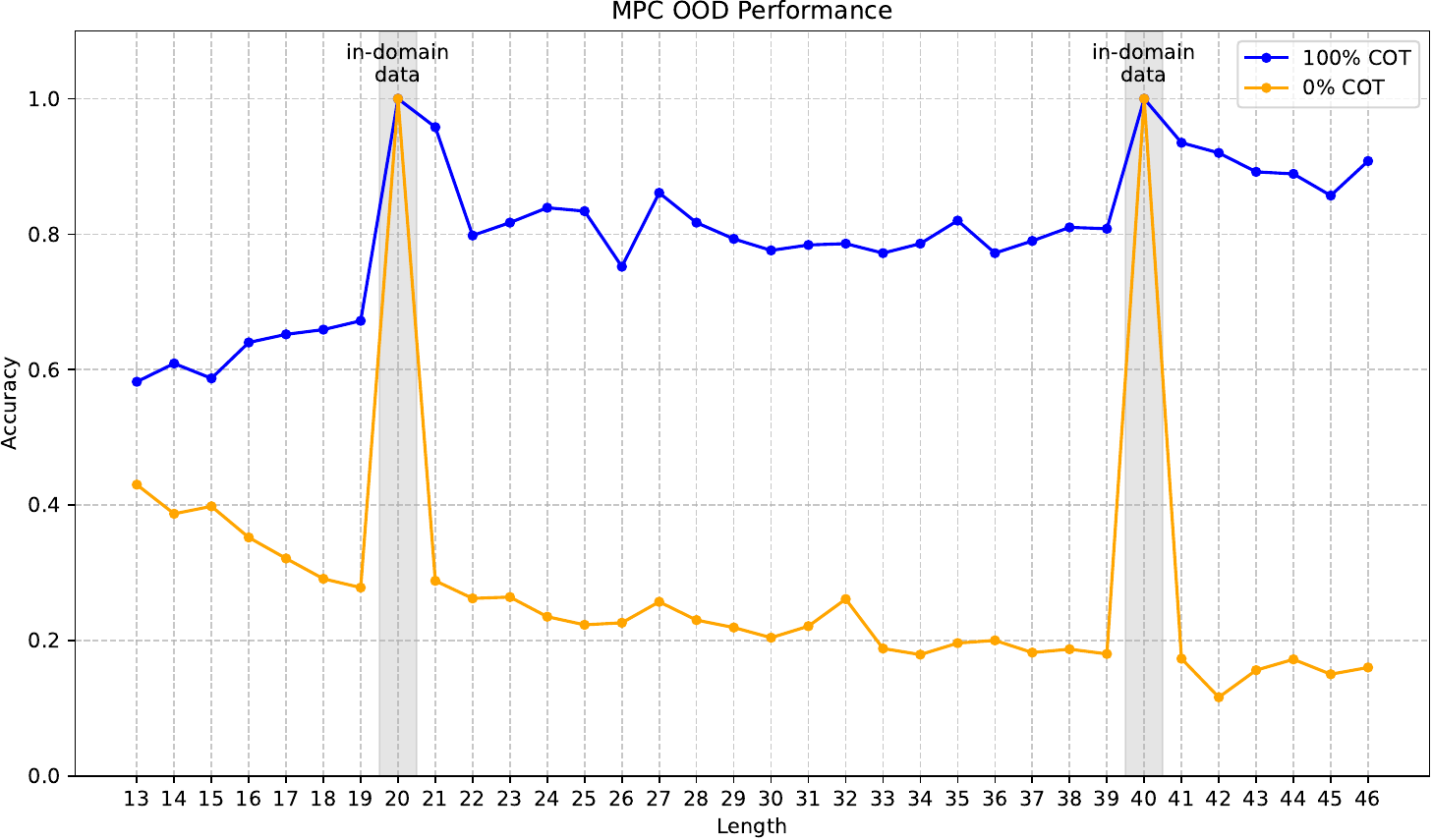}
}\hfill
\subfigure[]{
    \includegraphics[width=0.85\linewidth]{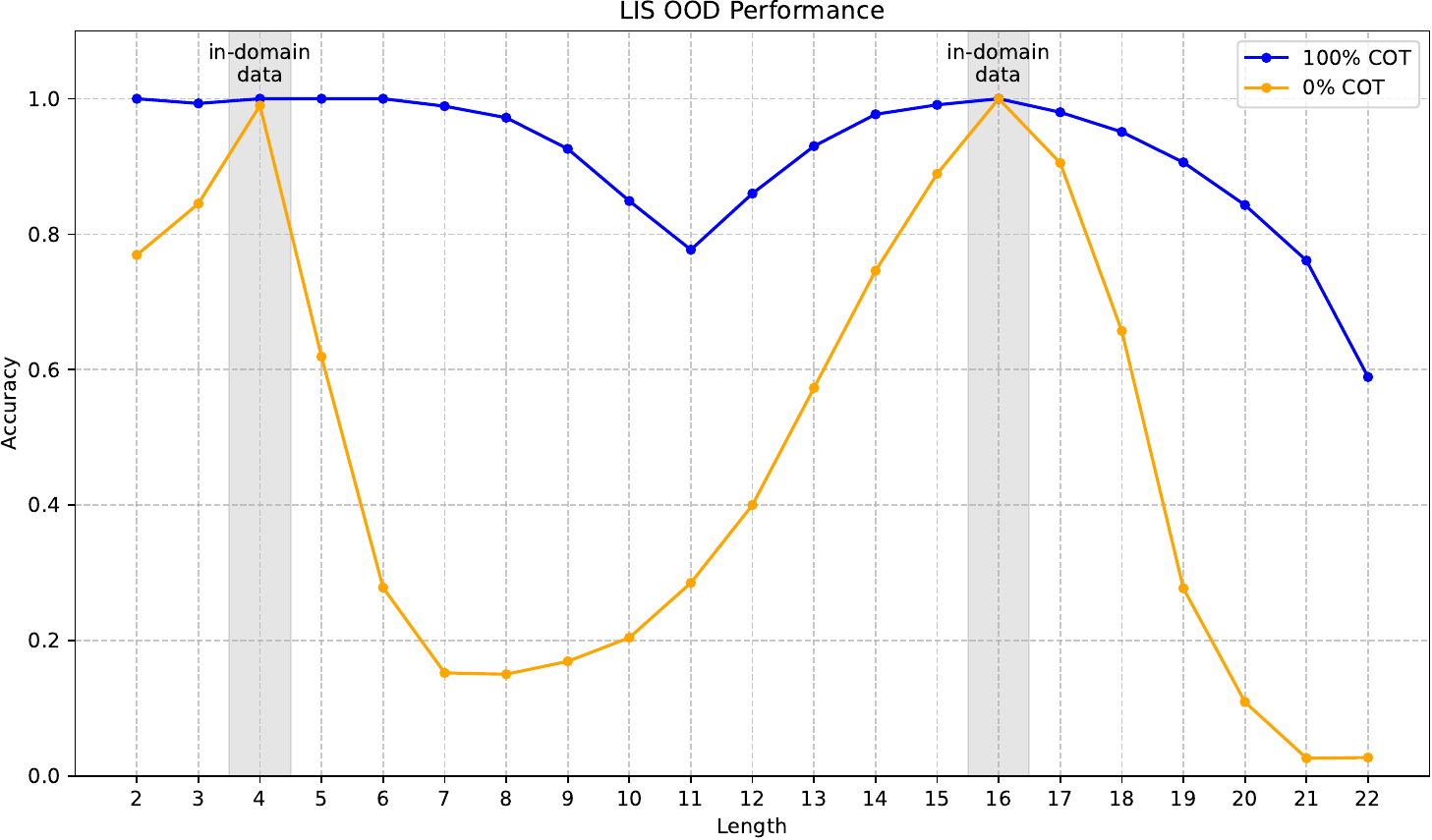}}
\caption{Comparison of Out-Of-Distribution (OOD) performance between MPC and LIS models under different Chain-of-Thought (COT) conditions across varying sequence lengths.}
\label{fig:ood_detail}
\end{figure}
\section*{Impact Statement}
This paper offers a novel perspective, demonstrating the indispensable role of CoT in enhancing the generalization capabilities of LMs. Through theoretical analysis and comprehensive empirical experimentation, we establish CoT as a critical enabler of robust out-of-distribution generalization. Crucially, this work provides valuable guidance for the development of effective data curation strategies, specifically for collecting data that maximizes the benefits of CoT training. This guidance is directly applicable to the industrial deployment of LMs and the fine-tuning of large models for novel tasks, offering a pathway to improve the generalization and real-world utility of these models through informed data acquisition methodologies.
\section*{Use of Large Language Models}
Large Language Models (LLMs) were used exclusively for language polishing, including grammar refinement and improving readability. All research ideas, methodological contributions, experimental design, analysis, and writing of technical content were conceived and carried out solely by the authors. The LLMs did not generate or influence any scientific claims, results, or interpretations. The authors take full responsibility for the content of this paper.


\end{document}